\def\eqref#1{equation~\ref{#1}}
\def\1{\bm{1}}
\def\rvepsilon{{\mathbf{\epsilon}}}
\def\vzero{{\bm{0}}}
\def\vone{{\bm{1}}}
\def\vc{{\bm{c}}}
\def\vh{{\bm{h}}}
\def\vq{{\bm{q}}}
\def\vu{{\bm{u}}}
\def\vv{{\bm{v}}}
\def\vw{{\bm{w}}}
\def\vx{{\bm{x}}}
\def\mA{{\bm{A}}}
\def\mB{{\bm{B}}}
\def\mF{{\bm{F}}}
\def\mG{{\bm{G}}}
\def\mI{{\bm{I}}}
\def\mP{{\bm{P}}}
\DeclareMathAlphabet{\mathsfit}{\encodingdefault}{\sfdefault}{m}{sl}
\SetMathAlphabet{\mathsfit}{bold}{\encodingdefault}{\sfdefault}{bx}{n}
\newcommand{\E}{\mathbb{E}}
\newcommand{\R}{\mathbb{R}}
\DeclareMathOperator*{\argmax}{arg\,max}
\DeclareMathOperator*{\argmin}{arg\,min}
\setlist{nosep}
\newcommand{\longapproach}{Symbolic Preconditions for Constrained Exploration}
\newcommand{\approach}{\textsc{Spice}\xspace}
\newcommand{\toolname}{\approach}
\newcommand{\states}{\mathcal{S}}
\newcommand{\actions}{\mathcal{A}}
\newcommand{\unsafestates}{\mathcal{S}_U}
\newcommand{\expected}{\mathbb{E}}
\newcommand{\support}{\mathrm{supp}}
\newcommand{\safe}{\mathrm{Safe}}
\newcommand{\prob}{\mathrm{Pr}}
\newcommand{\grad}{\nabla}
\newcommand{\symbclass}{\mathcal{G}}
\newcommand{\neuralclass}{\mathcal{F}}
\newcommand{\mixedclass}{\mathcal{H}}
\newcommand{\todo}[1]{\textcolor{red}{#1}}
\newtheorem*{rep@theorem}{\rep@title}
\newcommand{\newreptheorem}[2]{%
\newenvironment{rep#1}[1]{%
 \def\rep@title{#2 \ref{##1}}%
 \begin{rep@theorem}}%
 {\end{rep@theorem}}}
\newtheorem{theorem}{Theorem}
\newtheorem{assumption}{Assumption}
\newtheorem{lemma}{Lemma}
\newtheorem{definition}{Definition}
\newcommand{\curvefigwidth}{0.32}
\title{Guiding Safe Exploration with \\ Weakest Preconditions}
\author{Greg Anderson, Swarat Chaudhuri \thanks{equal advising}, Isil Dillig \footnotemark[1],  \\
Department of Computer Science \\
The University of Texas at Austin \\
Austin, TX, USA \\
\texttt{\{ganderso, swarat, isil\}@cs.utexas.edu}
}
\begin{document}
\maketitle

\begin{abstract}
In reinforcement learning for safety-critical settings, 
it is often desirable for the agent to obey safety constraints at all points in time, including during training.  
We present a novel neurosymbolic approach called \approach{}  to solve this \emph{safe exploration problem}.  \approach{} uses an online shielding layer based on symbolic \emph{weakest preconditions} to achieve a more precise safety analysis than existing tools without unduly impacting the training process.
We evaluate the approach on a suite of continuous control benchmarks and show that it can achieve comparable performance to existing safe learning techniques while incurring fewer safety violations. Additionally, we present theoretical results showing that \approach{} converges to the optimal safe policy under reasonable assumptions.
\end{abstract}

\section{Introduction}
\label{sec:introduction}

In many real-world applications of 
reinforcement learning (RL), it is crucial for the agent to behave safely during training. Over the years, a body of \emph{safe exploration} techniques \citep{garcia2015comprehensive} has  emerged to address this challenge. Broadly, these methods aim to converge to high-performance policies while ensuring that every intermediate policy seen during learning satisfies a set of safety constraints. Recent work has developed neural versions of these methods \citep{cpo,dalal2018safe,conservative-critic} that can handle continuous state spaces and complex policy classes. 

Any method for safe exploration needs a mechanism for deciding if an action can be safely executed at a given state. 
Some existing approaches use prior knowledge about system dynamics \citep{berkenkamp2017safe,revel} to make such judgments. A more broadly applicable class of methods make these decisions using learned predictors represented as neural networks. 
For example, such a predictor can be a learned advantage function over the constraints \citep{cpo,yang2020projection} or a critic network \citep{conservative-critic,dalal2018safe} that predicts the safety implications of an action. 

However, neural predictors of safety can require numerous potentially-unsafe environment interactions for training and also suffer from approximation errors. Both traits are problematic in safety-critical, real-world settings. In this paper, we introduce a \emph{neurosymbolic} approach to learning safety predictors that is designed to alleviate these difficulties. 

Our approach, called \approach~\footnote{\approach is available at \url{https://github.com/gavlegoat/spice}.}, is similar to \citet{conservative-critic} in that we use a learned model to filter out unsafe actions. However, the novel idea in \approach is to use the symbolic method of \emph{weakest preconditions} \citep{dijkstra1976discipline} to compute, from a single-time-step environment model, a predicate that decides if a given sequence of future actions is safe. Using this predicate, we symbolically compute a \emph{safety shield} \citep{AlshiekhBEKNT18} that intervenes whenever the current policy proposes an unsafe action. The environment model is repeatedly updated during the learning process using data safely collected using the shield. The computation of the weakest precondition and the shield is repeated, leading to a more refined shield, on each such update.

The benefit of this approach is sample-efficiency: to construct a safety shield for the next $k$ time steps, 
\approach only needs enough data to learn a single-step environment model. 
We show this benefit using an implementation of the method 
in which the environment model is given by a piecewise linear function and the shield is computed through quadratic programming (QP).
On a suite of challenging continuous control benchmarks from prior work,  
\approach has comparable performance as fully neural approaches to safe exploration and incurs far fewer safety violations  on average.

In summary, this paper makes the following contributions:
\begin{itemize}[leftmargin=0.2in]
    \item We present the first neurosymbolic framework for safe exploration with learned models of safety. 
    \item We present a theoretical analysis of the safety and performance of our approach.
    \item We develop an efficient, QP-based instantiation of the approach and 
   show that it offers greater safety than end-to-end neural approaches without a significant performance penalty. 
\end{itemize}

\section{Preliminaries}

\textbf{Safe Exploration.}
We formalize safe exploration in terms of a constrained Markov decision process (CMDP) with a distinguished set of unsafe states. Specifically, a CMDP is a structure $\mathcal{M} = (\states, \actions, r, P, p_0, c)$ where $\states$ is the set of states, $\actions$ is the set of actions, $r : \states \times \actions \to \R$ is a reward function, $P(\vx' \mid \vx, \vu)$, where $\vx,\vx' \in \states$ and $\vu \in \actions$, is a probabilistic transition function, $p_0$ is an initial distribution over states, and $c$ is a cost signal. Following prior work~\citep{conservative-critic}, we consider the case where the cost signal is a boolean indicator of failure, and we further assume that the cost signal is defined by a set of \emph{unsafe states} $\unsafestates$. That is, $c(\vx) = 1$ if $\vx \in \unsafestates$ and $c(\vx) = 0$ otherwise. A \emph{policy} is a stochastic function $\pi$ mapping states to distributions over actions. A policy, in interaction with the environment, generates \emph{trajectories} (or \emph{rollouts}) $\vx_0, \vu_0, \vx_1, \vu_1, \ldots, \vu_{n-1}, \vx_n$ where $\vx_0 \sim p_0$, each $\vu_i \sim \pi(\vx_i)$, and each $\vx_{i+1} \sim P(\vx_i, \vu_i)$. Consequently, each policy induces probability distributions $\states_\pi$ and $\actions_\pi$ on the state and action. Given a \emph{discount factor} $\gamma < 1$, the long-term \emph{return} of a policy $\pi$ is $R(\pi) = \expected_{\vx_i, \vu_i \sim \pi}\left[ \sum_i \gamma^i r(\vx_i, \vu_i) \right]$.

The goal of standard reinforcement learning is to find a policy $\pi^* = \argmax_\pi R(\pi)$. Popular reinforcement learning algorithms accomplish this goal by developing a sequence of policies $\pi_0, \pi_1, \ldots, \pi_N$ such that $\pi_N \approx \pi^*$. We refer to this sequence of polices as a \emph{learning process}. Given a bound $\delta$, the goal of  \emph{ safe exploration}  is to discover a learning process $\pi_0, \ldots, \pi_N$ such that
\[\pi_N = \argmax\nolimits_\pi R(\pi) \qquad \text{and} \qquad \forall 1 \le i \le N. \  \ P_{\vx \sim \states_{\pi_i}}(\vx \in \unsafestates) < \delta \]
That is, the final policy in the sequence should be optimal in terms of the long-term reward and every policy in the sequence (except for $\pi_0$) should have a bounded probability $\delta$ of unsafe behavior. Note that this definition does not place a safety constraint on $\pi_0$ because we assume that  nothing is known about the environment a priori. 

\textbf{Weakest Preconditions.}
Our approach to the safe exploration problem is built on \emph{weakest preconditions}~\citep{dijkstra1976discipline}.
At a high level, weakest preconditions allow us to ``translate'' constraints on a program's output to constraints on its input. As a very simple example, consider the function $x \mapsto x+1$. The weakest precondition for this function with respect to the constraint $ret > 0$ (where $ret$ indicates the return value) would be $x > -1$. In this work, the ``program'' will be a model of the environment dynamics, with the inputs being state-action pairs and the outputs being states.

For the purposes of this paper, we present a simplified weakest precondition definition that is tailored towards our setting.  Let $f : \states \times \actions \to 2^\states$ be a nondeterministic transition function. As we will see in Section~\ref{sec:instantiation}, $f$ represents a PAC-style bound on the environment dynamics. We define an alphabet $\Sigma$ which consists of a set of \emph{symbolic} actions $\omega_0, \ldots, \omega_{H-1}$ and states $\chi_0, \ldots, \chi_H$. Each symbolic state and action can be thought of as a variable representing an \emph{a priori unkonwn} state and action. Let $\phi$ be a first order formula over $\Sigma$. The symbolic states and actions represent a trajectory in the environment defined by $f$, so they are linked by the relation $\chi_{i+1} \in f(\chi_i, \omega_i)$ for $0 \le i < H$. Then, for a given $i$, the weakest precondition of $\phi$ is a formula $\psi$ over $\Sigma \setminus \{ \chi_{i+1} \}$ such that (1) for all $e \in f(\chi_i, \omega_i)$, we have $\psi \implies \phi[\chi_{i+1} \mapsto e]$ and (2) for all $\psi'$ satisfying condition (1), $\psi' \implies \psi$. Here, the notation $\phi[\chi_{i+1} \mapsto e]$ represents the formula $\phi$ with all instances of $\chi_{i+1}$ replaced by the expression $e$. Intuitively, the first condition ensures that, after taking one environment step from $\chi_i$ under action $\omega_i$, the system will always satisfy $\phi$, no matter how the nondeterminism of $f$ is resolved. The second condition ensures that $\phi$ is as permissive as possible, which prevents us from ruling out states and actions that are safe in reality.

\section{\longapproach}
\label{sec:safe_mbpo}

\begin{wrapfigure}{L}{0.6\textwidth}
\begin{minipage}{0.6\textwidth}
\vspace{-0.7cm}
\begin{algorithm}[H]
\begin{algorithmic}
\Procedure{Spice}{}
\State Initialize an empty dataset $D$ and random policy $\pi$
\For{epoch in $1 \ldots N$}
  \If{epoch $ = 1$}
    \State $\pi_S \gets \pi$
  \Else
    \State $\pi_S \gets \lambda \vx . \Call{WpShield}{M, \vx, \pi(\vx)}$\footnotemark
  \EndIf
  \State Unroll real trajectores $\{(s_i, a_i, s'_i, r_i)\}$ under $\pi_S$
  \State $D = D \cup \{(s_i, a_i, s'_i, r_i)\}$
  \State $M \gets \Call{LearnEnvModel}{D}$
  \State Optimize $\pi$ using the simulated environment $M$
\EndFor
\EndProcedure
\end{algorithmic}
\caption{The main learning algorithm}
\label{alg:main}
\end{algorithm}
\end{minipage}
\end{wrapfigure}
\footnotetext{Note that $\lambda$ is an anonymous function operator rather than a regularization constant.}

Our approach, \longapproach{} (\approach{}), uses a learned environment model to both improve sample efficiency and support safety analysis at training time. To do this, we build on top of model-based policy optimization (MBPO)~\citep{mbpo}. Similar to MBPO, the model in our approach is used to generate synthetic policy rollout data which can be fed into a model-free learning algorithm to train a policy. In contrast to MBPO, we reuse the environment model to ensure the safety of the system. This dual use of the environment allows for both efficient optimization and safe exploration.

The main training procedure is shown in Algorithm~\ref{alg:main} and simultaneously learns an environment model $M$ and the policy $\pi$. The algorithm maintains a dataset $D$ of observed environment transitions, which is obtained by executing the current policy $\pi$ in the environment. \approach{} then uses this dataset to learn an environment $M$, which is used to optimize the current policy $\pi$, as done in model-based RL. The key difference of our technique from standard model-based RL is the use of a \emph{shielded policy} $\pi_S$ when unrolling trajectories to construct dataset $D$. This is necessary for safe exploration because executing $\pi$ in the real environment could result in safety violations. In contrast to prior work, the shielded policy $\pi_S$ in our approach is defined by an online weakest precondition computation which finds a \emph{constraint} over the action space which \emph{symbolically} represents all safe actions. This procedure is described in detail in Section~\ref{sec:instantiation}.
\section{Shielding with Polyhedral Weakest Preconditions}
\label{sec:instantiation}


\subsection{Overview of Shielding Approach}
\label{sec:inst-overview}

\begin{wrapfigure}{L}{0.51\textwidth}
\begin{minipage}{0.51\textwidth}
\vspace{-0.8cm}
\begin{algorithm}[H]
\begin{algorithmic}
\Procedure{WpShield}{$M, \vx_0, \vu_0^*$}
\State $f \gets \Call{Approximate}{M, \vx_0, \vu_0^*}$
\State $\phi_H \gets \bigwedge_{i=1}^H \chi_i \in \states \setminus \unsafestates$
\For{$t$ from $H-1$ down to $0$}
\State $\phi_t \gets \Call{wp}{\phi_{t+1}, f}$
\EndFor
\State $\phi \gets \phi_0[\chi_0 \mapsto \vx_0]$
\State $(\vu_0, \ldots, \vu_{H-1}) = \argmin\limits_{\vu_0', \ldots, \vu_{H-1}' \vDash \phi} \| \vu_0' - \vu_0^*\|^2$
\State \textbf{return} $\vu_0$
\EndProcedure
\end{algorithmic}
\caption{Shielding a proposed action}
\label{alg:shield}
\end{algorithm}
\end{minipage}
\end{wrapfigure}

Our high-level online intervention approach is presented in Algorithm~\ref{alg:shield}. Given an environment model $M$, the current state $\vx_0$ and a proposed action $\vu_0^*$, the \Call{WpShield}{} procedure chooses a modified action $\vu_0$ which is as similar as possible to $\vu_0^*$ while ensuring safety. We consider an action to be \emph{safe} if, after executing that action in the environment, there exists a sequence of follow-up actions $\vu_1, \ldots, \vu_{H-1}$ which keeps the system away from the unsafe states over a finite time horizon $H$. In more detail, our intervention technique works in three steps:

    
    

    \textbf{Approximating the environment.} 
Because computing the weakest precondition of a constraint with respect to a complex environment model (e.g., deep neural network) is intractable, Algorithm~\ref{alg:shield} calls the \Call{Approximate}{} procedure to obtain a simpler first-order local Taylor approximation to the environment model centered at $(\vx_0, \vu_0^*)$. That is, given the environment model $M$, it computes matrices $\mA$ and $\mB$, a vector $\vc$, and an error $\varepsilon$ such that $f(\vx, \vu) = \mA \vx + \mB \vu + \vc + \Delta$ where $\Delta$ is an unknown vector with elements in $[-\varepsilon, \varepsilon]$. The error term is computed based on a normal Taylor series analysis such that with high probability, $M(\vx, \vu) \in f(\vx, \vu)$ in a region close to $\vx_0$ and $\vu_0^*$.

    \textbf{Computation of safety constraint.} Given a linear approximation $f$ of the environment,  Algorithm~\ref{alg:shield} iterates backwards in time, starting with the safety constraint  $\phi_H$ at the end of the time horizon $H$. In particular, the initial constraint $\phi_H$ asserts that all  (symbolic) states $\chi_1, \ldots, \chi_H$ 
    reached within the time horizon are inside the safe region. Then, the loop inside Algorithm~\ref{alg:shield} uses the \Call{wp}{} procedure (described in the next two subsections) to eliminate one symbolic state at a time from the formula $\phi_i$. After the  loop terminates, all of the state variables except for $\chi_0$ have been eliminated from $\phi_0$, so $\phi_0$ is a formula over $\chi_0, \omega_0, \ldots, \omega_{H-1}$. The next line of Algorithm~\ref{alg:shield} simply replaces the symbolic variable $\chi_0$ with the current state $\vx_0$ in order to find a constraint over only the actions. 

    \textbf{Projection onto safe space.} The final step of the shielding procedure  is to find a  sequence $\vu_0, \ldots, \vu_{H-1}$ of actions such that (1)  $\phi$ is satisfied  and (2) the distance $\| \vu_0 - \vu_0^* \|$ is minimized. Here, the first condition enforces the safety of the shielded policy, while the second condition ensures that the shielded policy is as similar as possible to the original one. The notation $\vu_0, \ldots, \vu_{H-1} \vDash \phi$ indicates that $\phi$ is true when the concrete values $\vu_0, \ldots, \vu_{H-1}$ are substituted for the symbolic values $\omega_0, \ldots, \omega_{H-1}$ in $\phi$. Thus, the $\argmin$ in Algorithm~\ref{alg:shield} is effectively a projection on the set of action sequences satisfying $\phi$. We discuss this optimization problem in Section~\ref{sec:inst-proj}.

\subsection{Weakest Preconditions for Polyhedra}
\label{sec:inst-poly}

In this section, we describe the \Call{wp}{} procedure used in Algorithm~\ref{alg:shield} for computing the weakest precondition of a safety constraint $\phi$ with respect to a linear environment model $f$. To simplify presentation, we assume that the  safe space is given as a  convex polyhedron --- i.e., all safe states satisfy the linear constraint $\mP \vx + \vq \le \vzero$. We will show how to relax this restriction in Section~\ref{sec:inst-union}.


Recall that our environment approximation $f$ is a linear function with bounded error, so we have constraints over the symbolic states and actions: $\chi_{i+1} = \mA \chi_i + \mB \omega_i + \vc + \Delta$ where $\Delta$ is an \emph{unknown} vector with elements in $[-\varepsilon, \varepsilon]$. In order to compute the weakest precondition of a linear constraint $\phi$ with respect to $f$, we simply replace each instance of $\chi_{i+1}$ in $\phi$ with $\mA \chi_i + \mB \omega_i + \vc + \Delta^*$ where $\Delta^*$ is the \emph{most pessimistic} possibility for $\Delta$. Because the safety constraints are linear and the expression for $\chi_{i+1}$ is also linear, this substitution results in a new linear formula which is a conjunction of constraints of the form $\vw^T \nu + \vv^T \Delta^* \le y$. For each element $\Delta_i$ of $\Delta$, if the coefficient of $\Delta_i^*$ is positive in $\vv$, then we choose $\Delta_i^* = \varepsilon$. Otherwise, we choose $\Delta_i^* = -\varepsilon$. This substitution yields the maximum value of $\vv^T \Delta^*$ and is therefore the most pessimistic possibility for $\Delta_i^*$.

\begin{wrapfigure}{R}{0.45\textwidth}
    \vspace{-0.6cm}
    \begin{center}
    \includegraphics[width=0.43\textwidth]{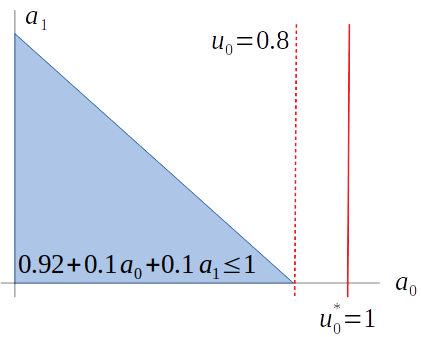}
    \end{center}
    \vspace{-0.3cm}
    \caption{Weakest precondition example.}
    \label{fig:wp-example}
\end{wrapfigure}
\textbf{Example.}
We illustrate the weakest precondition computation through simple example: Consider a car driving down a (one-dimensional) road whose goal is to reach the other end of the road as quickly as possible while obeying a speed limit. The state of the car is a position $x$ and velocity $v$. The action space consists of an acceleration $a$. Assume there is bounded noise in the velocity updates so the dynamics are $x' = x + 0.1 v$ and $v' = v + 0.1 a + \varepsilon$ where $-0.01 \le \varepsilon \le 0.01$ and the safety constraint is $v \le 1$. Suppose the current velocity is $v_0 = 0.9$ and the safety horizon is two. Then, starting with the safety constraint $v_1 \le 1 \wedge v_2 \le 1$ and stepping back through the environment dynamics, we get the precondition $v_1 \le 1 \wedge v_1 + 0.1 a_1 + \varepsilon_1 \le 1$. Stepping back one more time, we find the condition $v_0 + 0.1 a_0 + \varepsilon_2 \le 1 \wedge v_0 + 0.1 a_0 + 0.1 a_1 + \varepsilon_1 + \varepsilon_2 \le 1$. Picking the most pessimistic values for $\varepsilon_1$ and $\varepsilon_2$ to reach $v_0 + 0.1 a_0 + 0.01 \le 1 \wedge v_0 + 0.1 a_0 + 0.1 a_1 + 0.02 \le 1$. Since $v_0$ is specified, we can replace $v_0$ with 0.9 to simplify this to a constraint over the two actions $a_0$ and $a_1$, namely $0.91 + 0.1 a_0 \le 1 \wedge 0.92 + 0.1 a_0 + 0.1 a_1 \le 1$. Figure~\ref{fig:wp-example} shows this region as the shaded triangle on the left. Any pair of actions $(a_0, a_1)$ which lies inside the shaded triangle is guaranteed to satisfy the safety condition for any possible values of $\varepsilon_1$ and $\varepsilon_2$.

\subsection{Extension to More Complex Safety Constraints}
\label{sec:inst-union}

In this section, we extend our weakest precondition computation technique to the setting where the safe region consists of \emph{unions} of convex polyhedra. That is, the state space is represented as a set of matrices $\mP_i$ and a set of vectors $\vq_i$ such that $\states \setminus \unsafestates = \bigcup_{i=1}^N \left\{ \vx \in \states \mid \mP_i x + \vq_i \le \vzero \right\}$. Note that, while individual polyhedra are limited in their expressive power, unions of polyhedra can approximate reasonable spaces with arbitrary precision. This is because a single polyhedron can approximate a convex set arbitrarily precisely \citep{poly-approx}, so unions of polyhedra can approximate unions of convex sets.


In this case, the formula $\phi_H$ in Algorithm~\ref{alg:shield} has the form
$\phi_H = \bigwedge\nolimits_{j=1}^H \bigvee\nolimits_{i=1}^N \mP_i \chi_j + \vq_i \le \vzero$.
However, the weakest precondition of a formula of this kind can be difficult to compute. Because the system may transition between two different polyhedra at each time step, there is a combinatorial explosion in the size of the constraint formula, and a corresponding exponential slowdown in the weakest precondition computation. Therefore, we replace $\phi_H$ with an approximation
$\phi_H' = \bigvee\nolimits_{i=1}^N \bigwedge\nolimits_{j=1}^H \mP_i \chi_j + \vq_i \le \vzero$ (that is, we swap the conjunction and the disjunction).
Note that $\phi_H$ and $\phi_H'$ are \emph{not} equivalent, but $\phi_H'$ is a stronger formula (i.e., $\phi_H' \implies \phi_H$). Thus, any states  satisfying $\phi_H'$ are also guaranteed to satisfy $\phi_H$, meaning that they will be safe. More intuitively, this modification asserts that, not only does the state stay within the safe region at each time step, but it stays within \emph{the same polyhedron} at each step within the time horizon.

With this modified formula, we can pull the disjunction outside the weakest precondition, i.e.,
\[\textsc{wp}\left( \bigvee\nolimits_{i=1}^N \bigwedge\nolimits_{j=1}^H \mP_i \chi_j + \vq_i \le \vzero, f \right) = \bigvee\nolimits_{i=1}^N \textsc{wp} \left( \bigwedge\nolimits_{j=1}^H \mP_i \chi_j + \vq_i \le \vzero, f \right).\]
The conjunctive weakest precondition on the right is of the form described in Section~\ref{sec:inst-poly}, so this computation can be done efficiently. Moreover, the number of disjuncts does not grow as we iterate through the loop in Algorithm~\ref{alg:shield}. This prevents the weakest precondition formula from growing out of control, allowing for the overall weakest precondition on $\phi_H'$ to be computed quickly.


Intuitively, the approximation we make to the formula $\phi_H$ does rule out some potentially safe action sequences. This is because it requires the system to stay within \emph{a single} polyhedron over the entire horizon. However, this imprecision can be ameliorated in cases where the different polyhedra comprising the state space overlap one another (and that overlap has non-zero volume). In that case, the overlap between the polyhedra serves as a ``transition point,'' allowing the system to maintain safety within one polyhedron until it enters the overlap, and then switch to the other polyhedron in order to continue its trajectory. A formal development of this property, along with an argument that it is satisfied in many practical cases, is laid out in Appendix~\ref{app:overlap}.



\textbf{Example.} Consider an environment which represents a robot moving in 2D space. The state space is four-dimensional, consisting of two position elements $x$ and $y$ and two velocity elements $v^x$ and $v^y$. The action space consists of two acceleration terms $a^x$ and $a^y$, giving rise to the dynamics
\begin{align*}
x = x + 0.1 v^x &\quad y = y + 0.1 v^y \\
v^x = v^x + 0.1 a^x &\quad v^y = v^y + 0.1 a^y
\end{align*}
In this environment, the safe space is $x \ge 2 \vee y \le 1$, so that the upper-left part of the state space is considered unsafe. Choosing a safety horizon of $H = 2$, we start with the initial constraint $(x_1 \ge 2 \vee y_1 \le 1) \wedge (x_1 \ge 2 \wedge y_2 \le 1)$. We transform this formula to the stronger formua $(x_1 \ge 2 \wedge x_2 \ge 2) \vee (y_1 \le 1 \wedge y_2 \le 1)$. By stepping backwards through the weakest precondition twice, we obtain the following formula over only the current state and future actions:
\[(x_0 + 0.1 v^x_0 \ge 2 \wedge x_0 + 0.2 v^x_0 + 0.01 a^x_0 \ge 2) \vee (y_0 + 0.1 v^y_0 \le 1 \wedge y_0 + 0.2 v^y_0 + 0.01 a^y_0 \le 1).\]

\subsection{Projection Onto the Weakest Precondition}
\label{sec:inst-proj}

After applying the ideas from Section~\ref{sec:inst-union}, each piece of the safe space yields a set of linear constraints over the action sequence $\vu_0, \ldots, \vu_{H-1}$. That is, $\phi$ from Algorithm~\ref{alg:shield} has the form
\[\phi = \bigvee\nolimits_{i=1}^N \sum\nolimits_{j=0}^{H-1} \mG_{i,j} \vu_j + \vh_i \le 0 .\]
Now, we need to find the action sequence satisfying $\phi$ for which the first action most closely matches the proposed action $\vu_0^*$. In order to do this, we can minimize the objective function $\| \vu_0 - \vu_0^* \|^2$. This function is quadratic, so we can represent this minimization problem as $N$ quadratic programming problems. That is, for each polyhedron $\mP_i, \vq_i$ in the safe region, we solve:
\begin{align*}
    \text{minimize} &\quad \| \vu_0^* - \vu_0 \|^2 \\
    \text{subject to} &\quad \sum\nolimits_{j=0}^{H-1} \mG_{i,j} \vu_j + \vh_i \le \vzero
\end{align*}
Such problems can be solved efficiently using existing tools. By applying the same technique independently to each piece of the safe state space, we reduce the projection problem to a relatively small number of calls to a quadratic programming solver. This reduction allows the shielding procedure to be applied fast enough to generate the amount of data needed for gradient-based learning.

\textbf{Example:}
Consider again Figure~\ref{fig:wp-example}. Suppose the proposed action  is $\vu_0^* = 1$, represented by the solid line in Figure~\ref{fig:wp-example}.  Since the proposed action is outside of the safe region, the projection operation will find the point inside the safe region that minimizes the distance \emph{along the $a_0$ axis only}. This leads to the dashed line in Figure~\ref{fig:wp-example}, which is the action $\vu_0$ that is as close as possible to $\vu_0^*$ while still intersecting the safe region represented by the shaded triangle. Therefore, in this case, \Call{WpShield}{} would return 0.8 as the safe action.

\section{Theoretical Results}
\label{sec:theory}

We will now develop theoretical results on the safety and performance of agents trained with \approach{}. For brevity, proofs have been deferred to Appendix~\ref{app:proofs}.

For the safety theorem, we will assume the model is approximately accurate with high probability and that the \Call{Approximate}{} procedure gives a sound local approximation to the model. Formally, $\prob_{\vx' \sim P(\cdot \mid \vx, \vu)} [ \| M(\vx, \vu) - \vx' \| > \varepsilon ] < \delta_M$, and if $f = \Call{Approximate}{M, \vx_0, \vu_0^*}$ then for all actions $\vu$ and all states $\vx$ reachable within $H$ time steps, $M(\vx, \vu) \in f(\vx, \vu)$.

\begin{theorem}
\label{thm:safety}
Let $\vx_0$ be a safe state and let $\pi$ be any policy. For $0 \le i < H$, let $\vu_i = \Call{WpShield}{M, \vx_i, \pi(\vx_i)}$ and let $\vx_{i+1}$ be the result of taking action $\vu_i$ at state $\vx_i$. Then with probability at least $(1 - \delta_M)^i$, $\vx_i$ is safe.
\end{theorem}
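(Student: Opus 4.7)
I would proceed by induction on $i$, tracking a per-step failure probability of at most $\delta_M$ along the rolled-out trajectory.

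\emph{Base case.} For $i = 0$ the hypothesis on $\vx_0$ gives safety with probability $1 = (1-\delta_M)^0$.

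\emph{Inductive step.} Suppose $\vx_i$ is safe with probability at least $(1-\delta_M)^i$, and condition on the event that $\vx_i$ is indeed safe (and hence reachable from $\vx_0$ in $i \le H-1$ steps, so that the soundness hypothesis on \Call{Approximate}{} applies). I would first establish that $\vu_i = \Call{WpShield}{M, \vx_i, \pi(\vx_i)}$ makes the one-step successor set under the linearized dynamics safe. Writing $f = \Call{Approximate}{M, \vx_i, \pi(\vx_i)}$, Algorithm~\ref{alg:shield} builds $\phi_0$ by iteratively applying $\Call{wp}{}$ backwards from $\phi_H = \bigwedge_{j=1}^H \chi_j \in \states \setminus \unsafestates$. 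By the defining property of weakest preconditions, made concrete in Section~\ref{sec:inst-poly} via the pessimistic-$\Delta$ substitution, any $(\chi_0, \omega_0, \ldots, \omega_{H-1})$ satisfying $\phi_0$ induces, under the nondeterministic dynamics $f$, a trajectory $\chi_1, \ldots, \chi_H$ lying entirely in $\states \setminus \unsafestates$ regardless of how the $\Delta$-nondeterminism resolves. Since the $\argmin$ in Algorithm~\ref{alg:shield} returns $\vu_i$ satisfying $\phi_0[\chi_0 \mapsto \vx_i]$, in particular every point of $f(\vx_i, \vu_i)$ lies in $\states \setminus \unsafestates$.

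\emph{Chaining the two approximations.} The soundness assumption gives $M(\vx_i, \vu_i) \in f(\vx_i, \vu_i)$, and the model-accuracy assumption gives $\|M(\vx_i, \vu_i) - \vx_{i+1}\| \le \varepsilon$ with probability $\ge 1 - \delta_M$ over $\vx_{i+1} \sim P(\cdot \mid \vx_i, \vu_i)$. Because the nondeterministic set $f(\vx_i, \vu_i)$ is exactly $\{M(\vx_i, \vu_i) + \Delta : \Delta \in [-\varepsilon, \varepsilon]^n\}$ in the linearized form used in Section~\ref{sec:inst-poly}, this places $\vx_{i+1} \in f(\vx_i, \vu_i)$, and therefore $\vx_{i+1}$ in the safe region, on an event of conditional probability at least $1 - \delta_M$. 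Multiplying by the inductive hypothesis then gives $\prob[\vx_{i+1} \text{ safe}] \ge (1 - \delta_M)(1 - \delta_M)^i = (1 - \delta_M)^{i+1}$, closing the induction.

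\emph{Main obstacle.} The delicate point is reconciling the two uses of the symbol $\varepsilon$: the Taylor-remainder bound inside \Call{Approximate}{} which guarantees $M \in f$, and the model-noise bound $\|M - \vx'\| \le \varepsilon$ on the real transitions. For the chaining step to go through cleanly, the $\varepsilon$ inflating $f$ must dominate the sum of the Taylor residual and the environment noise, so that the real successor lies in $f(\vx_i, \vu_i)$ on the same $(1-\delta_M)$-event rather than a strictly smaller event; otherwise a union bound would cost an extra failure term per step. A secondary issue is feasibility of the QP inside \Call{WpShield}{}: if no action sequence satisfies $\phi$, the shield has nothing to return. I would address this by noting that $\vx_i$ is itself safe by the inductive hypothesis, so for sufficiently tight $f$ and sufficiently short $H$ a ``stay-near-$\vx_i$'' action sequence is feasible, keeping the $\argmin$ well-defined.
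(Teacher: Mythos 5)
Your proposal is correct and follows essentially the same route as the paper: the weakest-precondition soundness plus the soundness of \Call{Approximate}{} give deterministic safety whenever the real transition lands in $f(\vx_i,\vu_i)$, and the per-step model-accuracy bound then yields the $(1-\delta_M)^i$ factor (the paper packages the first part as a separate deterministic lemma rather than interleaving it into an induction, but the content is identical). The two obstacles you flag are real and are exactly what the paper's appendix version addresses by hypothesis: the QP-feasibility issue is handled by adding a ``realizable safety'' assumption on $\vx_0$, and the reconciliation of the Taylor residual with the environment-noise bound is implicitly assumed by taking the error band of $f$ to dominate both.
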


This theorem shows why \approach{} is better able to maintain safety compared to prior work. Intuitively, constraint violations can only occur in \approach{} when the environment model is incorrect. In contrast, statistical approaches to safe exploration are subject to safety violations caused by \emph{either} modeling error \emph{or} actions which are not safe even with respect to the environment model. Note that for a safety level $\delta$ and horizon $H$, a modeling error can be computed as $\delta_M < 1 - (1 - \delta) / \exp(H-1)$.

The performance analysis is based on treating Algorithm~\ref{alg:main} as a functional mirror descent in the policy space, similar to~\citet{propel} and~\citet{revel}. We assume a class of neural policies $\neuralclass$, a class of safe policies $\symbclass$, and a joint class $\mixedclass$ of neurosymbolic policies. We proceed by considering the shielded policy $\lambda \vx. \Call{WpShield}{M, \vx, \pi_N(\vx)}$ to be a \emph{projection} of the neural policy $\pi_N$ into $\symbclass$ for a Bregman divergence $D_F$ defined by a function $F$. We define a safety indicator $Z$ which is one whenever $\Call{WpShield}{M, \vx, \pi^{(i)}(\vx)} = \pi^{(i)}(\vx)$ and zero otherwise, and we let $\zeta = \expected[1-Z]$. Under reasonable assumptions (see Appendix~\ref{app:proofs} for a full discussion), we prove a regret bound for Algorithm~\ref{alg:main}.

\begin{theorem}
\label{thm:optimality}
Let $\pi_S^{(i)}$ for $1 \le i \le T$ be a sequence of safe policies learned by \approach{} (i.e., $\pi_S^{(i)} = \lambda \vx. \Call{WpShield}{M, \vx, \pi(\vx)}$) and let $\pi_S^*$ be the optimal safe policy. Additionally we assume the reward function $R$ is Lipschitz in the policy space and let
$L_R$ be the Lipschitz constant of $R$,
$\beta$ and $\sigma^2$ be the bias and variance introduced by sampling in the    gradient computation,
$\epsilon$ be an upper bound on the bias incurred by using projection onto the weakest precondition to approximate imitation learning,
$\epsilon_m$ be an upper bound the KL divergence between the model and the true environment dynamics at all time steps, and
$\epsilon_\pi$ be an upper bound on the TV divergence between the policy used to gather data and the policy being trained at all time steps.
Then setting the learning rate $\eta = \sqrt{\frac{1}{\sigma^2} \left( \frac{1}{T} + \epsilon \right)}$, we have the expected regret bound:
\[ R \left( \pi_S^* \right) - \E\left[ \frac{1}{T} \sum\nolimits_{i=1}^T R \left( \pi_S^{(i)} \right) \right] = O \left( \sigma \sqrt{\frac{1}{T} + \epsilon} + \beta + L_R \zeta + \epsilon_m + \epsilon_\pi \right) \]
\end{theorem}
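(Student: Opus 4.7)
The plan is to follow the functional mirror descent template used in \citet{propel} and \citet{revel}: view each epoch of Algorithm~\ref{alg:main} as two conceptual steps, namely (i) a stochastic functional gradient step in the neural policy class $\neuralclass$ that produces an unshielded iterate $\pi^{(i)}$, and (ii) a projection of $\pi^{(i)}$ onto the safe class $\symbclass$ with respect to the Bregman divergence $D_F$, where the shielded policy $\pi_S^{(i)} = \lambda \vx.\textsc{WpShield}(M,\vx,\pi^{(i)}(\vx))$ plays the role of the projected iterate in the joint class $\mixedclass$. First I would verify that the shield indeed realizes this projection: whenever the safety indicator $Z=1$ the shield is the identity, so it trivially matches the Bregman projection, and whenever $Z=0$ the quadratic program in Section~\ref{sec:inst-proj} chooses the closest safe action, which we lift to a bound on $D_F(\pi_S^{(i)},\pi^{(i)})$ up to an imitation-learning bias $\epsilon$.

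Next I would invoke the standard stochastic mirror-descent regret lemma for a convex (or, as in \citet{propel}, sufficiently well-behaved) objective over $\symbclass$. With step size $\eta=\sqrt{(1/T+\epsilon)/\sigma^2}$, that lemma yields a bound of the form
\[
R(\pi_S^*)-\E\!\left[\frac{1}{T}\sum_{i=1}^{T} R\bigl(\pi_S^{(i)}\bigr)\right] \;=\; O\!\left(\sigma\sqrt{\tfrac{1}{T}+\epsilon}\;+\;\beta\right),
\]
where the $\sigma\sqrt{\,\cdot\,}$ term is the usual mirror-descent rate inflated by the projection bias $\epsilon$, and $\beta$ absorbs the bias of the stochastic gradient estimator. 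This step is mostly bookkeeping once the projection interpretation is in place; the main technical subtlety is to ensure that the gradient step on the model-optimized surrogate objective can be treated as an unbiased-plus-$\beta$-biased estimator of the true functional gradient.

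Then I would account for the two additional error sources that distinguish \approach{} from a vanilla mirror-descent analysis. The first is the intervention gap: on the event $Z=0$ the shield may change the action, and by Lipschitzness of $R$ in the policy space this costs at most $L_R \cdot \E[1-Z] = L_R\zeta$ in expected return. The second is the model-vs-reality gap inherited from MBPO-style training. Here I would plug in the simulation lemma of \citet{mbpo}, which bounds the discrepancy between returns under the learned model $M$ and under $P$ by $O(\epsilon_m+\epsilon_\pi)$, given the bounds on per-step KL model error and TV policy drift assumed in the statement. Adding these two contributions to the mirror-descent bound yields the advertised regret.

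The hard part will be the projection interpretation in paragraph one: the shield operates on individual actions via a per-state QP rather than on whole policies, so one must argue that the aggregate effect over $\states_\pi$ corresponds to a Bregman projection under a specific $F$ (as in \citet{revel}) and that the discarded action sequences at horizons $>1$ contribute only to the $\epsilon$ bias. Once that identification is made rigorous, the remaining steps — applying the mirror-descent lemma, charging $L_R\zeta$ for intervention, and adding $\epsilon_m+\epsilon_\pi$ from the MBPO simulation lemma — are standard and combine to give the stated $O(\sigma\sqrt{1/T+\epsilon}+\beta+L_R\zeta+\epsilon_m+\epsilon_\pi)$ regret.
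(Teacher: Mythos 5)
Your proposal assembles the same three ingredients as the paper --- the PROPEL functional-mirror-descent regret lemma, the REVEL-style treatment of the shield as a Bregman projection, and the MBPO simulation lemma --- but it charges two of the error terms at a different level. The paper's proof is a single gradient-bias accounting: it bounds the total bias of the gradient estimator as the sum of the sampling bias $\beta$, the shield-interference bias $O(\beta + L_R\zeta)$ (Lemma~\ref{lem:shield-bias-bound}, which measures the discrepancy between $\grad_\neuralclass R$ and $\grad_\mixedclass R$), and the model bias $O(\epsilon_m + \epsilon_\pi)$ (Lemma~\ref{lem:mbpo-bias-bound}, which converts the MBPO return bound into a gradient bias via the policy gradient theorem and the assumption $\pi(\vu\mid\vx) > \delta$), and then plugs this total bias into Lemma~\ref{lem:propel-general-bound} once. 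You instead apply the mirror-descent lemma with only the sampling bias and then add $L_R\zeta$ and $\epsilon_m + \epsilon_\pi$ as direct return-level corrections. The return-level route can be made to work for the model term (apply the regret bound to the model returns and use the simulation lemma twice to transfer to true returns), but your accounting of $L_R\zeta$ as ``the cost of changing actions on the event $Z=0$'' misses where that term actually arises: the mirror-descent iterates \emph{are} the shielded policies $\pi_S^{(i)}$, so there is no shielded-versus-unshielded return gap to pay; rather, the $L_R\zeta$ term is the price of estimating the gradient in $\neuralclass$ while descending in $\mixedclass$, i.e., it is irreducibly a gradient-bias term. If you keep your decomposition you would still owe this gradient-bias correction, so the cleaner path is the paper's: fold everything into the bias parameter of the PROPEL lemma. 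Your identification of the projection step and of the role of $\epsilon$ as the imitation/projection bias matches the paper.
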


This theorem provides a few intuitive results, based on the additive terms in the regret bound. First, $\zeta$ is the frequency with which we intervene in network actions and as $\zeta$ decreases, the regret bound becomes tighter. This fits our intuition that, as the shield intervenes less and less, we approach standard reinforcement learning. The two terms $\epsilon_m$ and $\epsilon_\pi$ are related to how accurately the model captures the true environment dynamics. As the model becomes more accurate, the policy converges to better returns. The other terms are related to standard issues in reinforcement learning, namely the error incurred by using sampling to approximate the gradient.
\section{Experimental Evaluation}
\label{sec:evaluation}

We now turn to a practical evaluation of \approach{}. Our implementation of \toolname{} uses PyEarth~\citep{pyearth} for model learning and CVXOPT~\citep{cvxopt} for quadratic programming. Our learning algorithm is based on MBPO~\citep{mbpo} using Soft Actor-Critic~\citep{sac} as the underlying model-free learning algorithm. Our code is adapted from~\cite{sac-impl}. We test \toolname{} using the benchmarks considered in~\cite{revel}. Further details of the benchmarks and hyperparameters are given in Appendix~\ref{app:results}.

\begin{wraptable}{R}{0.45\textwidth}
    \vspace{-0.4cm}
    \centering
    \begin{tabular}{r|ccc}
        Benchmark     & CPO  & CSC-MBPO  & \approach{} \\ \hline
        acc           & 684  & 137       & 286  \\
        car-racing    & 2047 & 1047      & 1169 \\
        mountain-car  & 2374 & 2389      & 6    \\
        noisy-road    & 0    & 0         & 0    \\
        noisy-road-2d & 286  & 37        & 31   \\
        obstacle      & 708  & 124       & 2    \\
        obstacle2     & 5592 & 1773      & 1861 \\
        pendulum      & 1933 & 2610      & 1211 \\
        road          & 0    & 0         & 0    \\
        road-2d       & 103  & 64        & 41   \\ \hline
        Average       & 9.48 & 3.77      & 1
    \end{tabular}
    \caption{Safety violations during training.}
    \label{tab:safety}
\end{wraptable}

We compare against two baseline approaches: Constrained Policy Optimization (CPO)~\citep{cpo}, a model-free safe learning algorithm, and a version of our approach which adopts the conservative safety critic shielding framework from~\citet{conservative-critic} (CSC-MBPO). Details of the CSC-MBPO approach are given in Appendix~\ref{app:results}. We additionally tested MPC-RCE~\citep{safe-mbrl}, another model-based safe-learning algorithm, but we find that it is too inefficient to be run on our benchmarks. Specifically MPC-RCE was only able to finish on average 162 episodes within a 2-day time period. Therefore, we do not include MPC-RCE in the results presented in this section.

\begin{figure}
    \vspace{-1.0cm}
    \captionsetup[subfigure]{aboveskip=-1pt, belowskip=-2pt}
    \centering
    \begin{subfigure}{\curvefigwidth\textwidth}
        \centering
        \includegraphics[width=\textwidth]{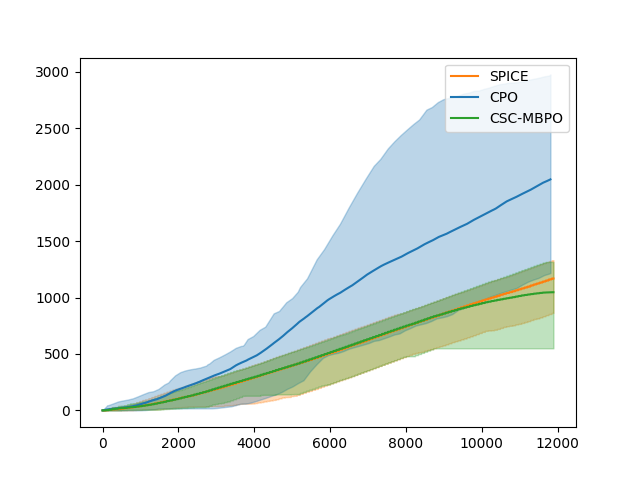}
        \caption{car-racing}
    \end{subfigure}
    \begin{subfigure}{\curvefigwidth\textwidth}
        \centering
        \includegraphics[width=\textwidth]{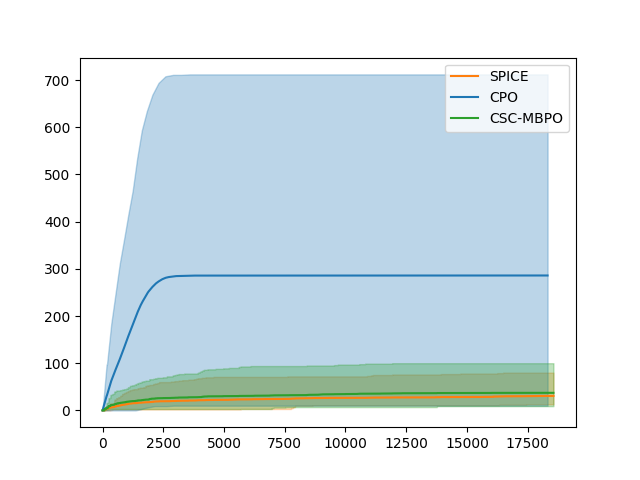}
        \caption{noisy-road-2d}
    \end{subfigure}
    \begin{subfigure}{\curvefigwidth\textwidth}
        \centering
        \includegraphics[width=\textwidth]{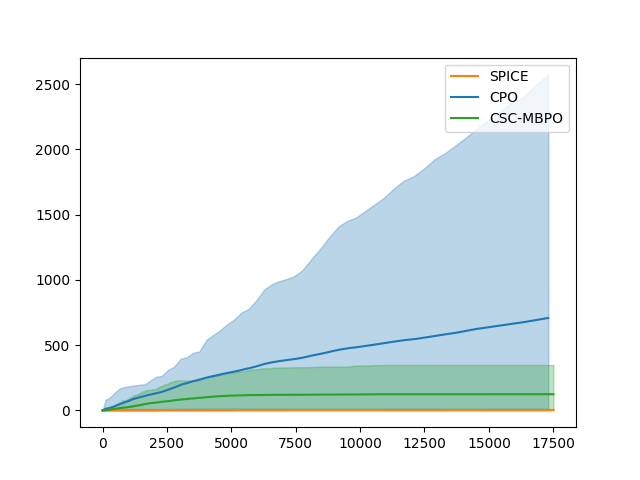}
        \caption{obstacle}
    \end{subfigure}
    \begin{subfigure}{\curvefigwidth\textwidth}
        \centering
        \includegraphics[width=\textwidth]{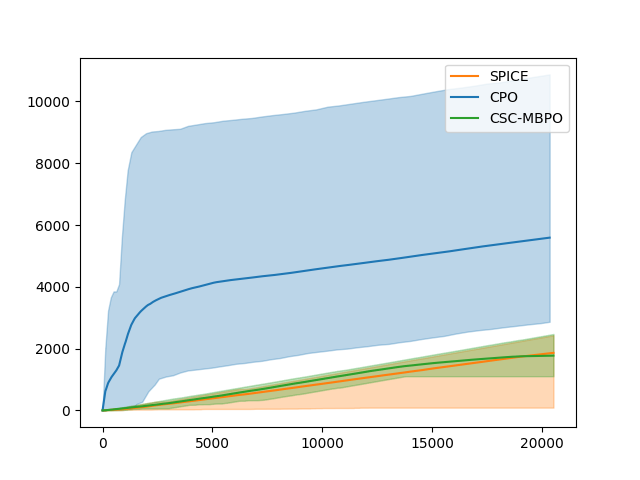}
        \caption{obstacle2}
    \end{subfigure}
    \begin{subfigure}{\curvefigwidth\textwidth}
        \centering
        \includegraphics[width=\textwidth]{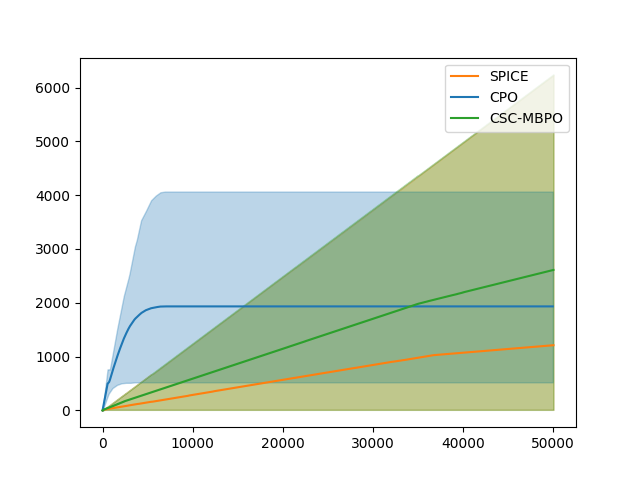}
        \caption{pendulum}
    \end{subfigure}
    \begin{subfigure}{\curvefigwidth\textwidth}
        \centering
        \includegraphics[width=\textwidth]{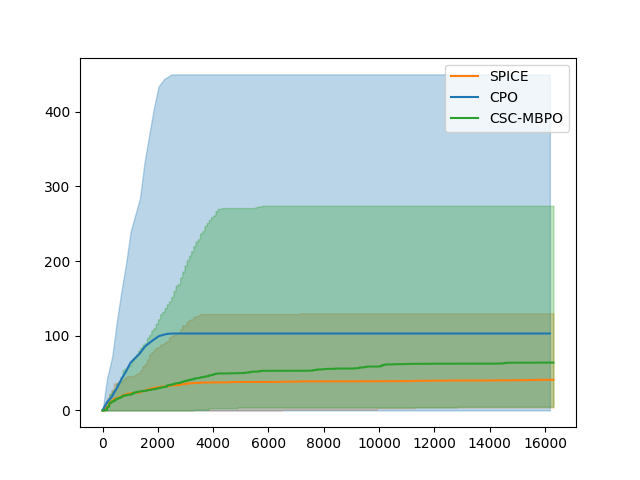}
        \caption{road-2d}
    \end{subfigure}
    \caption{Cumulative safety violations over time.}
    \label{fig:safety}
\end{figure}

\textbf{Safety.}
First, we evaluate how well our approach ensures system safety during training. In Table~\ref{tab:safety}, we present the number of safety violations encountered during training for our baselines. The last row of the table shows the average increase in the number of safety violations compared to \approach{} (computed as the geometric mean of the ratio of safety violations for each benchmark). This table shows that \toolname{} is safer than CPO in every benchmark and achieves, on average, a 89\% reduction in safety violations. CSC-MBPO is substantially safer than CPO, but still not as safe as \approach{}. We achieve a 73\% reduction in safety violations on average compared to CSC-MBPO. To give a more detailed breakdown, Figure~\ref{fig:safety} shows how the safety violations accumulate over time for several of our benchmarks. The solid line represents the mean over all trials while the shaded envelope shows the minimum and maximum values. As can be seen from these figures, CPO starts to accumulate violations more quickly and continues to violate the safety property more over time than \toolname{}. Figures for the remaining benchmarks can be found in Appendix~\ref{app:results}.

Note that there are a few benchmarks (acc, car-racing, and obstacle2) where \approach{} incurs more violations than CSC-MBPO. There are two potential reasons for this increase. First, \approach{} relies on choosing a model class through which to compute weakest preconditions (i.e., we need to fix an \Call{Approximate}{} function in Algorithm~\ref{alg:shield}). For these experiments, we use a linear approximation, but this can introduce a lot of approximation error. A more complex model class allowing a more precise weakest precondition computation may help to reduce safety violations. Second, \approach{} uses a bounded-time analysis to determine whether a safety violation can occur within the next few time steps. By contrast, CSC-MBPO uses a neural model to predict the long-term safety of an action. As a result, actions which result in safety violations far into the future may be easier to intercept using the CSC-MBPO approach. Given that \approach{} achieves much lower safety violations on average, we think these trade-offs are desirable in many situations.

\begin{figure}
    \vspace{-0.3cm}
    \captionsetup[subfigure]{aboveskip=-1pt, belowskip=-2pt}
    \centering
    \begin{subfigure}{\curvefigwidth\textwidth}
        \centering
        \includegraphics[width=\textwidth]{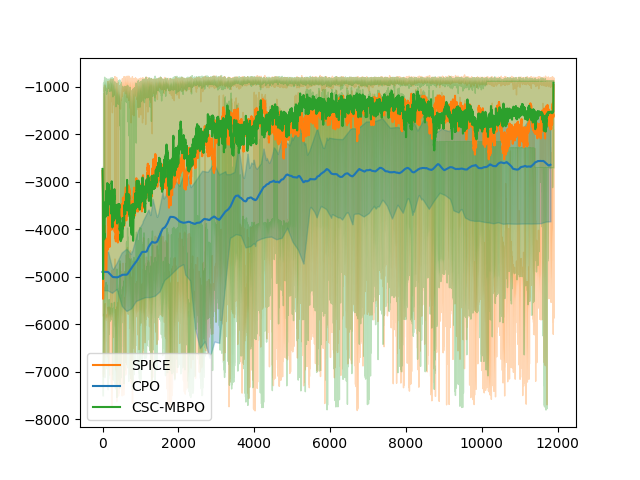}
        \caption{car-racing}
    \end{subfigure}
    \begin{subfigure}{\curvefigwidth\textwidth}
        \centering
        \includegraphics[width=\textwidth]{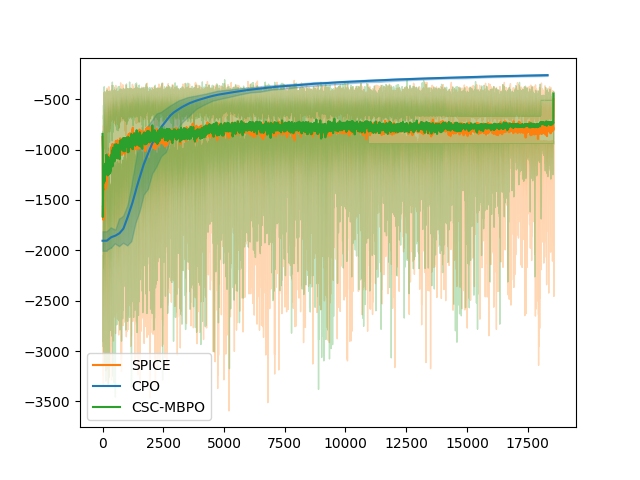}
        \caption{noisy-road-2d}
    \end{subfigure}
    \begin{subfigure}{\curvefigwidth\textwidth}
        \centering
        \includegraphics[width=\textwidth]{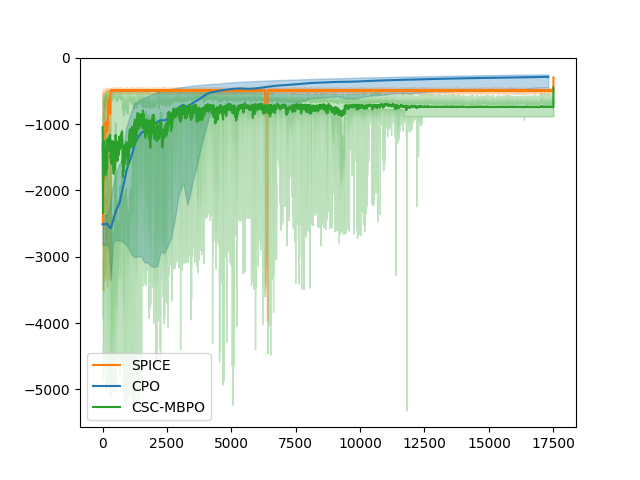}
        \caption{obstacle}
    \end{subfigure}
    \begin{subfigure}{\curvefigwidth\textwidth}
        \centering
        \includegraphics[width=\textwidth]{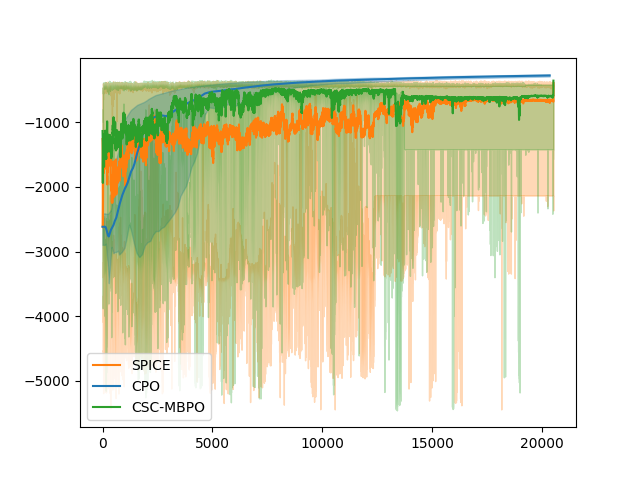}
        \caption{obstacle2}
    \end{subfigure}
    \begin{subfigure}{\curvefigwidth\textwidth}
        \centering
        \includegraphics[width=\textwidth]{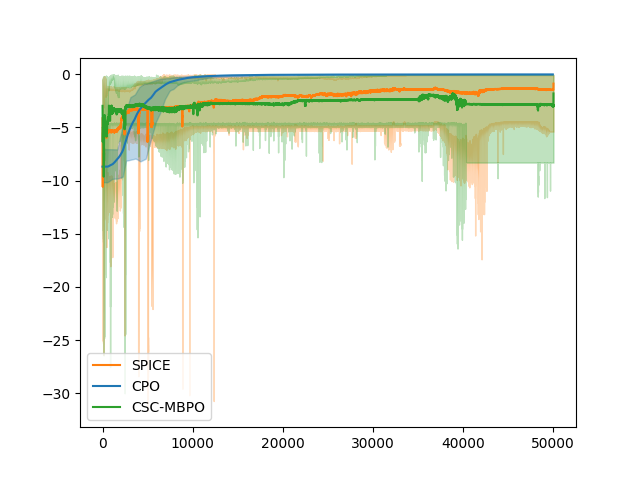}
        \caption{pendulum}
    \end{subfigure}
    \begin{subfigure}{\curvefigwidth\textwidth}
        \centering
        \includegraphics[width=\textwidth]{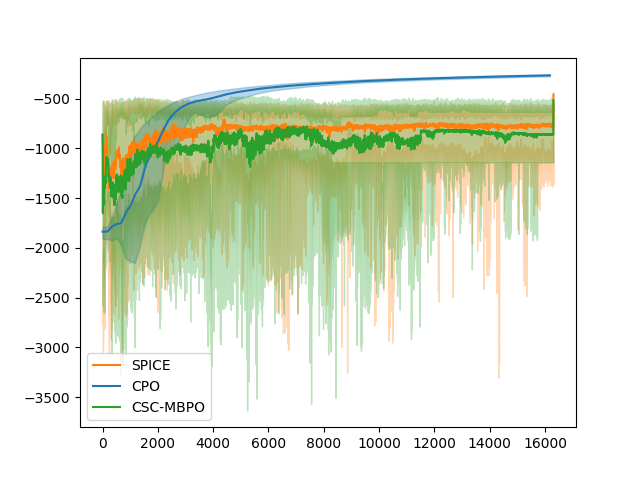}
        \caption{road-2d}
    \end{subfigure}
    \caption{Training curves for \toolname{} and CPO.}
    \label{fig:training_curves}
\end{figure}

\begin{wrapfigure}{R}{0.4\textwidth}
  \vspace{-0.4cm}
  \centering
  \includegraphics[width=0.38\textwidth]{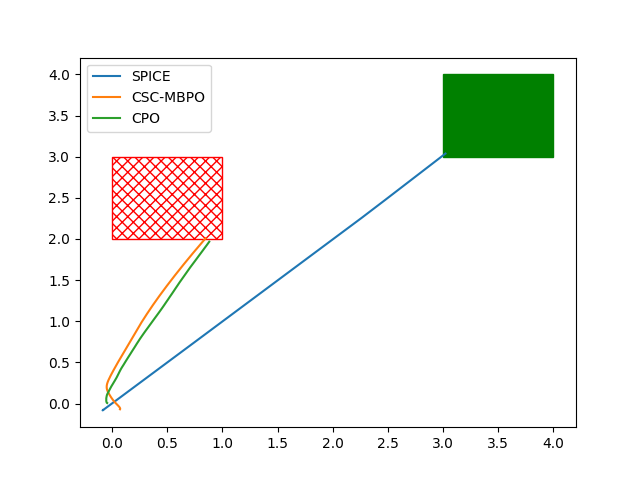}
  \caption{Trajectories early in training.}
  \label{fig:visualization}
  \vspace{-0.2cm}
\end{wrapfigure}

\textbf{Performance.}
We also test the performance of the learned policies on each benchmark in order to understand what impact our safety techniques have on model learning. Figure~\ref{fig:training_curves} show the average return over time for \toolname{} and the baselines. These curves show that in most cases \toolname{} achieves a performance close to that of CPO, and about the same as CSC-MBPO. We believe that the relatively modest performance penalty incurred by \toolname{} is an acceptable trade-off in some safety-critical systems given the massive improvement in safety. Further results are presented in Appendix~\ref{app:results}.

\textbf{Qualitative Evaluation.}
Figure~\ref{fig:visualization} shows the trajectories of policies learned by \approach{}, CPO, and CSC-MBPO partway through training (after 300 episodes). In this figure, the agent controls a robot moving on a 2D plane which must reach the green shaded area while avoiding the red cross-hatched area. For each algorithm, we sampled 100 trajectories and plotted the worst one. Note that, at this point during training, both CPO and CSC-MBPO behave unsafely, while the even worst trajectory sampled under \approach{} was still safe. See Appendix~\ref{app:results} for a more complete discussion of how these trajectories evolve over time during training.

\section{Related Work}
\label{sec:related}

Existing work in safe reinforcement learning can be categorized by the kinds of guarantees it provides: statistical approaches bound the probability that a violation will occur, while worst-case analyses prove that a policy can never reach an unsafe state. \approach{} is, strictly speaking, a statistical approach---without a predefined environment model, we cannot guarantee that the agent will never behave unsafely. However, as we show experimentally, our approach is substantially safer in practice than existing safe learning approaches based on learned cost models.

\textbf{Statistical Approaches.}
Many approaches to the safe reinforcement learning problem provide statistical bounds on the system safety \citep{cpo, safe-mbrl, yang2020projection, ma2022cap, zhang2020focops, satija2020backward}. These approaches maintain an environment model and then use a variety of statistical techniques to generate a policy which is likely to be safe with respect to the environment model. This leads to two potential sources of unsafe behavior: the policy may be unsafe with respect to the model, or the model may be an inaccurate representation of the environment. Compared to these approaches, we eliminate the first source of error by always generating policies that are \emph{guaranteed} to be safe with respect to an environment model. We show in our experiments that this drastically reduces the number of safety violations encountered in practice. Some techniques use a learned model together with a linearized cost model to provide bounds, similar to our approach~\citep{dalal2018safe, li2021rag}. However, in these works, the system only looks ahead one time step and relies on the assumption that the cost signal cannot have too much inertia. Our work alleviates this problem by providing a way to look ahead several time steps to achieve a more precise safety analysis.

A subset of the statistical approaches are tools that maintain neural models of the cost function in order to intervene in unsafe behavior~\citep{conservative-critic, yang2021wcsac, yu22reachability}. These approaches maintain a critic network which represents the long-term cost of taking a particular action in a particular state. However, because of the amount of data needed to train neural networks accurately, these approaches suffer from a need to collect data in several unsafe trajectories in order to build the cost model. Our symbolic approach is more data-efficient, allowing the system to avoid safety violations more often in practice. This is borne out by the experiments in Section~\ref{sec:evaluation}.

\textbf{Worst-Case Approaches.}
Several existing techniques for safe reinforcement learning provide formally verified guarantees with respect to a worst-case environment model, either during training \citep{revel} or at convergence \citep{AlshiekhBEKNT18, bacci2021verfrl, bastani2018verifiable, fulton2019verifiably, GillulaT12, zhu2019an}. An alternative class of approaches uses either a \emph{nominal} environment model \citep{berkenkamp-safeexp, fisac2018} or a user-provided safe policy as a starting point for safe learning \citep{chow-lyapunov, cheng-barrier}. In both cases, these techniques require a predefined model of the dynamics of the environment. In contrast, our technique does not require the user to specify any model of the environment, so it can be applied to a much broader set of problems.

\section{Conclusion}
\label{sec:conclusion}

\approach{} is a new approach to safe exploration that combines the advantages of gradient-based learning with symbolic reasoning about safety. In contrast to prior work on formally verified exploration \citep{revel}, 
\approach{} can be used without a precise, handwritten specification of the environment behavior. The linchpin of our approach is a new policy intervention which can efficiently intercept unsafe actions and replace them with actions which are as similar as possible, but provably safe. This intervention method is fast enough to support the data demands of gradient-based reinforcement learning and precise enough to allow the agent to explore effectively.

There are a few limitations to \approach{}. Most importantly, because the interventions are based on a linearized environment model, they are only accurate in a relatively small region near the current system state. This in turn limits the time horizons which can be considered in the safety analysis, and therefore the strength of the safety properties. Our experiments show that \approach{} is still able to achieve good empirical safety in this setting, but a more advanced policy intervention that can handle more complex environment models could further improve these results. Additionally, \approach{} can make no safety guarantees about the initial policy used to construct the first environment model, since there is no model to verify against at the time that that policy is executed. This issue could be alleviated by assuming a conservative initial model which is refined over time, and there is an interesting opportunity for future work combining partial domain expertise with learned dynamics.

\section*{Funding Acknowledgements}
This work was supported in part by the United States Air Force and DARPA under Contract No. FA8750-20-C-0002, by ONR under Award No. N00014-20-1-2115, and by NSF under grants CCF-1901376 and CCF-1918889. Compute resources for the experiments were provided by the Texas Advanced Computing Center.

\bibliography{main}
\bibliographystyle{iclr2023_conference}

\appendix
\section{Proofs of Theorems}
\label{app:proofs}

In this section we present proofs for the theorems in Section~\ref{sec:theory}. First, we will look at the safety results. We need some assumptions for this theorem:

\begin{assumption}
\label{asm:approximate}
The function \Call{Approximate}{} returns a sound, nondeterministic approximation of $M$ in a region reachable from state $\vx_0$ over a time horizon $H$. That is, let $\states_R$ be the set of all states $\vx$ for which there exists a sequence of actions under which the system can transition from $\vx_0$ to $\vx$ within $H$ time steps. Then if $f = \Call{Approximate}{M, \vx_0, \vu_0^*}$ then for all $\vx \in \states_R$ and $\vu \in \actions$, $M(\vx, \vu) \in f(\vx, \vu)$.
\end{assumption}

\begin{assumption}
\label{asm:env-close}
The model learning procedure returns a model which is close to the actual environment with high probability. That is, if $M$ is a learned environment model then for all $\vx, \vu$, \[\prob_{\vx' \sim P(\cdot \mid \vx, \vu)} \left[ \| M(\vx, \vu) - \vx' \| > \varepsilon \right] < \delta \]
\end{assumption}

\begin{definition}
\label{def:realizable-safety}
A state $\vx_0$ is said to have \emph{realizable safety} over a time horizon $H$ if there exists a sequence of actions $\vu_0, \ldots, \vu_{H-1}$ such that, when $\vx_0, \ldots, \vx_H$ is the trajectory unrolled starting from $\vx_0$ in the true environment, the formula $\phi$ inside $\Call{WpShield}{M, \vx_i, \pi(\vx_i)}$ is satisfiable for all $i$.
\end{definition}

\begin{lemma}
\label{lem:shield-is-safe}
(\Call{WpShield}{} is safe under bounded error.) Let $H$ be a time horizon, $\vx_0$ be a state with realizable safety over $H$, $M$ be an environment model, and $\pi$ be a policy. Choose $\varepsilon$ such that for all states $\vx$ and actions $\vu$, $\|M(\vx, \vu) - \vx'\| \le \varepsilon$ where $\vx'$ is sampled from the true environment transition at $\vx$ and $\vu$. For $0 \le i < H$ let $\vu_i = \Call{WpShield}{M, \vx_i, \pi(\vx_i)}$ and let $\vx_{i+1}$ be sampled from the true environment at $\vx_i$ and $\vu_i$. Then for $0 \le i \le H$, $\vx_i \not\in \unsafestates$.
\end{lemma}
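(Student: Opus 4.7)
I would proceed by induction on $i$ for $0 \le i \le H$, showing that $\vx_i \not\in \unsafestates$. The base case follows from the realizable-safety hypothesis on $\vx_0$: if $\vx_0$ were already unsafe, the shield would be operating from a violating state to begin with, so realizability implicitly requires $\vx_0 \not\in \unsafestates$ (and in any case it is part of the standing safe-state assumption on the initial state of a trajectory). For the inductive step, I would assume $\vx_0, \ldots, \vx_i$ are safe and use the semantics of the weakest precondition computation inside $\Call{WpShield}{M, \vx_i, \pi(\vx_i)}$ to argue that $\vx_{i+1}$ must also be safe.

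The heart of the argument is to unfold Algorithm~\ref{alg:shield}. By construction, $\phi_0$ is the iterated weakest precondition of $\phi_H = \bigwedge_{j=1}^H \chi_j \in \states\setminus\unsafestates$ with respect to the nondeterministic approximation $f$. By the definition of weakest precondition applied $H$ times, any tuple $(\chi_0,\omega_0,\dots,\omega_{H-1})$ satisfying $\phi_0$ has the property that every $f$-consistent trajectory $\chi_1,\dots,\chi_H$ (i.e.\ $\chi_{j+1}\in f(\chi_j,\omega_j)$) lands in $\states\setminus\unsafestates$ at every step. After substituting $\vx_i$ for $\chi_0$ we obtain $\phi$, and realizable safety at $\vx_i$ guarantees that the constrained minimization in the last step of the algorithm is feasible, so $\vu_i$ is well-defined and together with some $\vu_{i+1},\dots,\vu_{i+H-1}$ models $\phi$.

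It then remains to transfer this $f$-based guarantee to the true environment for the single step that is actually executed. I would invoke the bound $\|M(\vx_i,\vu_i)-\vx_{i+1}\|\le\varepsilon$ together with Assumption~\ref{asm:approximate} to conclude $\vx_{i+1}\in f(\vx_i,\vu_i)$: $f$ is designed as an $\varepsilon$-inflated linearization around $(\vx_i,\vu_0^*)$ that soundly over-approximates $M$ on the $H$-step reachable set, and thickening by $\varepsilon$ in every coordinate absorbs the model-vs-environment gap on top of the Taylor remainder. Applying the one-step weakest-precondition semantics to the satisfying tuple $(\vx_i,\vu_i,\dots,\vu_{i+H-1})$ with $\chi_1$ instantiated to $\vx_{i+1}$ then yields $\vx_{i+1}\in\states\setminus\unsafestates$, closing the induction. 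Because the shield is invoked afresh at every step, only the first-step guarantee is used, so the receding-horizon nature of the algorithm is harmless as long as realizability is maintained at each $\vx_i$, which is exactly what Definition~\ref{def:realizable-safety} assumes.

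\textbf{Main obstacle.} The one non-mechanical step is justifying $\vx_{i+1}\in f(\vx_i,\vu_i)$: Assumption~\ref{asm:approximate} literally only gives $M(\vx_i,\vu_i)\in f(\vx_i,\vu_i)$, so I need to read the $\varepsilon$ used in constructing $f$ as encompassing both the linearization error and the model error $\|M-\vx'\|\le\varepsilon$ from the lemma hypothesis (equivalently, to treat \Call{Approximate}{} as soundly over-approximating the true one-step dynamics rather than merely $M$). Once this containment is pinned down, the rest is a direct bookkeeping exercise on the weakest-precondition semantics.
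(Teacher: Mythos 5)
Your proposal is correct and follows essentially the same route as the paper's proof: unroll the weakest precondition backwards through the loop of Algorithm~\ref{alg:shield}, use Assumption~\ref{asm:approximate} together with condition (1) of the weakest-precondition definition to conclude that $\phi_0$ entails safety of every $f$-consistent trajectory, and then use the fact that the returned actions satisfy $\phi_0$ (with realizability guaranteeing feasibility). The one subtlety you flag --- that Assumption~\ref{asm:approximate} literally only gives $M(\vx_i,\vu_i)\in f(\vx_i,\vu_i)$ while the argument needs the true successor $\vx_{i+1}$ to lie in $f(\vx_i,\vu_i)$, so the $\varepsilon$-inflation of $f$ must be read as absorbing the model error from the lemma's hypothesis --- is a real imprecision that the paper's own proof silently elides (it writes ``for all $e \in M(\vx_i,\vu_i)$'' as if $M$ were set-valued), and your explicit treatment of it is an improvement rather than a deviation.
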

\begin{proof}
Combining Assumption~\ref{asm:approximate} with condition 1 in the definition of weakest preconditions, we conclude that for all $e \in M(\vx_i, \vu_i)$, $\Call{wp}{\phi_{i+1}, f} \implies \phi_{i+1}[\vx_{i+1} \mapsto e]$. Stepping backward through the loop in $\Call{WpShield}{}$, we find that for all $e_i \in M(\vx_{i-1}, \vu_{i-1})$ for $1 \le i \le H$, $\phi_0 \implies \phi_H[\vx_1 \mapsto e_1, \ldots, \vx_H \mapsto e_H]$. Because $\phi_H$ asserts the safety of the system, we have that $\phi_0$ also implies that the system is safe. Then because the actions returned by $\Call{WpShield}{}$ are constrained to satisfy $\phi_0$, we also have that $\vx_i$ for $0 \le i \le H$ are safe.
\end{proof}

\begin{reptheorem}{thm:safety}
(\Call{WpShield}{} is probabilistically safe.) For a given state $\vx_0$ with realizable safety over a time horizon $H$, a neural policy $\pi$, and an environment model $M$, let $\varepsilon$ and $\delta$ be the error and probability bound of the model as defined in Assumption~\ref{asm:env-close}. For $0 \le i < H$, let $\vu_i = \Call{WpShield}{M, \vx_i, \pi(\vx_i)}$ and let $\vx_{i+1}$ be the result of taking action $\vu_i$ at state $\vx_i$ then with probability at least $(1 - \delta)^i$, $\vx_i$ is safe.
\end{reptheorem}
\begin{proof}
By Lemma~\ref{lem:shield-is-safe}, if $\|M(\vx, \vu) - \vx'\| \le \varepsilon$ then $\vx_i$ is safe for all $0 \le i \le H$. By Assumption~\ref{asm:env-close}, $\|M(\vx, \vu) - \vx'\| \le \varepsilon$ with probability at least $1 - \delta$. Then at each time step, with probability at most $\delta$ the assumption of Lemma~\ref{lem:shield-is-safe} is violated. Therefore after $i$ time steps, the probability that Lemma~\ref{lem:shield-is-safe} can be applied is at least $(1 - \delta)^i$, so $\vx_i$ is safe with probability at least $(1 - \delta)^i$.
\end{proof}

In order to establish a regret bound, we will analyze Algorithm~\ref{alg:main} as a functional mirror descent in the policy space. In this view, we assume the existence of a class $\symbclass$ of safe policies, a class $\neuralclass \supseteq \symbclass$ of neural policies, and a class $\mixedclass$ of mixed, neurosymbolic policies.

We define a safety indicator $Z$ which is one whenever $\Call{WpShield}{M, \vx, \pi(\vx)} = \pi(\vx)$ and zero otherwise. We will need a number of additional assumptions:
\begin{enumerate}
    \item $\mixedclass$ is a vector space equipped with an inner product $\langle \cdot , \cdot \rangle$ and induced norm $\|\pi\| = \sqrt{\langle \pi , \pi \rangle}$;
    \item The long-term reward $R$ is $L_R$-Lipschitz;
    \item $F$ is a convex function on $\mixedclass$, and 
    $\nabla F$ is $L_F$-Lipschitz continuous on $\mixedclass$;
    \item $\mixedclass$ is bounded (i.e., $\sup \{\|\pi - \pi'\| \mid \pi,\pi' \in \mixedclass\} < \infty$);
    \item $\mathbb{E}[1- Z] \le \zeta$, i.e., the probability that the shield modifies the action is bounded above by $\zeta$;
    \item the bias introduced in the sampling process is bounded by $\beta$, i.e., $\|\mathbb{E}[\widehat{\nabla}_\neuralclass \mid \pi] - \nabla_\neuralclass R(\pi) \| \le \beta$, where $\widehat{\nabla}_\neuralclass$ is the estimated gradient;
    \item for $\vx \in \states$, $\vu \in \actions$, and policy $\pi \in \mixedclass$, if $\pi(\vu \mid \vx) > 0$ then $\pi(\vu \mid \vx) > \delta$ for some fixed $\delta > 0$;
    \item the KL-divergence between the true environment dynamics and the model dynamics are is bounded by $\epsilon_m$; and
    \item the TV-divergence between the policy used to gather data and the policy being trained is bounded by $\epsilon_\pi$.
\end{enumerate}

For the following regret bound, we will need three useful lemmas from prior work. These lemmas are reproduced below for completeness.

\begin{lemma}
\label{lem:mbpo-return-bound}
(\cite{mbpo}, Lemma B.3)
Let the expected KL-divergence between two transition distributions be bounded by $\max_t \expected_{\vx \sim p_1^t(\vx)} D_{KL}(p_1(\vx' \vu \mid \vx) \| p_2(\vx', \vu \mid \vx)) \le \epsilon_m$ and $\max_\vx D_{TV} (\pi_1 (\vu \mid \vx) \| \pi_2(\vu \mid \vx)) < \epsilon_\pi$. Then the difference in returns under dynamics $p_1$ with policy $\pi_1$ and $p_2$ with policy $\pi_2$ is bounded by
\[\left| R_{p_1}(\pi_1) - R_{p_2}(\pi_2) \right| \le \frac{2 R \gamma(\epsilon_\pi + \epsilon_m)}{(1 - \gamma)^2} + \frac{2 R \epsilon_\pi}{1 - \gamma} = O(\epsilon_\pi + \epsilon_m).\]
\end{lemma}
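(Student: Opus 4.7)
The plan is to follow a standard simulation-lemma / telescoping argument, splitting the discrepancy by dynamics and by policy and then bounding each piece across the discounted horizon. I would first insert an intermediate quantity and apply the triangle inequality, $\left| R_{p_1}(\pi_1) - R_{p_2}(\pi_2) \right| \le \left| R_{p_1}(\pi_1) - R_{p_2}(\pi_1) \right| + \left| R_{p_2}(\pi_1) - R_{p_2}(\pi_2) \right|.$ The first term isolates how much changing the transition model distorts the return while holding the policy fixed, and the second isolates how much changing the policy distorts the return while holding the dynamics fixed. This decomposition is what naturally produces the two different powers of $(1-\gamma)^{-1}$ appearing in the statement.

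For the policy-only term, I would expand both returns as discounted sums of expected rewards under $p_2$ and bound the difference between the state--action occupancy measures induced by $\pi_1$ and $\pi_2$. The per-state bound $\TV(\pi_1(\cdot\mid\vx)\,\|\,\pi_2(\cdot\mid\vx)) < \epsilon_\pi$ translates directly into a single-step state--action TV bound of $\epsilon_\pi$. Since the per-step reward is bounded in magnitude by $R$, a direct discounted coupling yields the $\tfrac{2R\epsilon_\pi}{1-\gamma}$ contribution---the single $(1-\gamma)^{-1}$ factor arises because each time step contributes an $O(\epsilon_\pi)$ mismatch that is summed once, not compounded.

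For the dynamics-only term, I would bound the TV distance between the length-$t$ trajectory distributions $p_{\pi_1,p_1}^{1:t}$ and $p_{\pi_1,p_2}^{1:t}$ by a telescoping argument: couple the two processes to agree on the history up to step $t-1$ and let them branch at step $t$, where the branching probability is controlled by $\expected[\TV(p_1(\cdot\mid\vx,\vu)\,\|\,p_2(\cdot\mid\vx,\vu))]$. This one-step TV bound is extracted from the KL bound $\epsilon_m$ via Pinsker's inequality (or, as in the MBPO derivation, by working with TV from the start). Summing the branching probabilities telescopes to an $O(t(\epsilon_\pi+\epsilon_m))$ TV gap on length-$t$ trajectory laws, and the tail return following a branch is itself bounded by $R/(1-\gamma)$; integrating $\gamma^t$ against this tail accumulates the second $1/(1-\gamma)$ factor and produces the $\tfrac{2R\gamma(\epsilon_\pi+\epsilon_m)}{(1-\gamma)^2}$ term.

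The main obstacle, and the step that deserves the most careful bookkeeping, is the compounding calculation in the dynamics piece: one must keep track of the fact that a mismatch introduced at time $t$ is multiplied by \emph{both} the discount $\gamma^t$ \emph{and} the worst-case discounted return of the remaining trajectory, which is what yields the extra $1/(1-\gamma)$ and explains why $\epsilon_m$ appears only with the $(1-\gamma)^{-2}$ coefficient while $\epsilon_\pi$ appears in both terms (since policy mismatch influences the immediate action as well as downstream states). Once the two pieces are summed, the stated inequality follows, and the $O(\epsilon_\pi + \epsilon_m)$ simplification is immediate because $R$ and the $\gamma$-dependent prefactors are fixed constants of the MDP.
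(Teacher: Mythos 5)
The paper does not actually prove this lemma: it is imported verbatim from \citet{mbpo} (their Lemma B.3) and reproduced only ``for completeness,'' so there is no in-paper proof to compare against. Your overall strategy---telescoping the mismatch between marginal distributions across the discounted horizon---is essentially the one used in the MBPO appendix, though MBPO runs a single combined recursion on the joint state-action marginals (roughly $\delta_t \le \delta_{t-1} + \epsilon_m + \epsilon_\pi$) rather than splitting first by a triangle inequality.

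However, your bookkeeping of where each factor comes from contains a genuine error. In your decomposition the second term, $|R_{p_2}(\pi_1) - R_{p_2}(\pi_2)|$, holds the dynamics fixed and changes only the policy; you claim this contributes only $\tfrac{2R\epsilon_\pi}{1-\gamma}$ because the per-step mismatch is ``summed once, not compounded.'' That is false: a per-state policy TV gap of $\epsilon_\pi$ still drifts the \emph{state} marginals, so the TV distance between the time-$t$ state-action laws grows like $t\epsilon_\pi + \epsilon_\pi$, and the policy-only term is $\tfrac{2R\gamma\epsilon_\pi}{(1-\gamma)^2} + \tfrac{2R\epsilon_\pi}{1-\gamma}$. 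Symmetrically, your dynamics-only term (policy fixed at $\pi_1$) should involve only $\epsilon_m$, not the ``$O(t(\epsilon_\pi+\epsilon_m))$'' gap you wrote down. These two misattributions happen to cancel, so the assembled bound is correct, but neither intermediate claim is true as stated and the middle of the argument would not survive scrutiny. Separately, you should be explicit that converting the KL hypothesis into a one-step TV bound via Pinsker costs a square root ($\TV \le \sqrt{\epsilon_m/2}$), so a literal reading of the hypotheses yields $\sqrt{\epsilon_m}$ rather than $\epsilon_m$ in the conclusion; this looseness is inherited from the MBPO statement itself, but a self-contained proof must either assume a TV bound on the dynamics directly or carry the square root through.
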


\begin{lemma}
\label{lem:shield-bias-bound}
(\cite{revel}, Appendix B)
Let $D$ be the diameter of $\mixedclass$, i.e., $D = \sup \{ \| \pi - \pi' \| \mid \pi, \pi' \in \mixedclass \}$. Then the bias incurred by approximating $\grad_\mixedclass R(\pi)$ with $\grad_\neuralclass R(\pi)$ is bounded by
\[\left\| \expected \left[ \hat{\grad}_\neuralclass \mid \pi \right] - \grad_\mixedclass R(\pi) \right\| = O(\beta + L_R \zeta) \]
\end{lemma}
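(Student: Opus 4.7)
\textbf{Proof proposal for Lemma \ref{lem:shield-bias-bound}.}

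The plan is to split the bias into a sampling component and a shielding component via the triangle inequality, and then bound each separately. Specifically, I would write
\[
\bigl\| \E[\hat{\grad}_\neuralclass \mid \pi] - \grad_\mixedclass R(\pi) \bigr\|
\;\le\;
\underbrace{\bigl\| \E[\hat{\grad}_\neuralclass \mid \pi] - \grad_\neuralclass R(\pi) \bigr\|}_{\text{sampling bias}}
\;+\;
\underbrace{\bigl\| \grad_\neuralclass R(\pi) - \grad_\mixedclass R(\pi) \bigr\|}_{\text{shielding bias}} .
\]
The sampling term is bounded by $\beta$ directly from Assumption (6), so the core work is to show that the shielding term is $O(L_R \zeta)$.

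For the shielding term, let $\pi_S = \lambda \vx.\Call{WpShield}{M,\vx,\pi(\vx)}$ denote the projection of $\pi$ into $\symbclass$. By construction, $\pi_S(\vx) = \pi(\vx)$ on the event $\{Z=1\}$, and $\pi_S$ and $\pi$ can differ only on $\{Z=0\}$, which has probability at most $\zeta$. I would then invoke the interpretation of $\grad_\mixedclass R(\pi)$ as the gradient of $R$ evaluated at the post-shield policy (i.e.\ the effective policy actually executed after projection) and $\grad_\neuralclass R(\pi)$ as the gradient of $R$ evaluated along the unshielded direction in $\neuralclass$. Writing both as directional derivatives along an arbitrary unit direction $v$ in the common tangent space, the difference becomes a directional derivative of $R(\pi) - R(\pi_S)$. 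Using $L_R$-Lipschitzness of $R$ (Assumption (2)) together with the fact that $\|\pi - \pi_S\| \le D \cdot \mathbf{1}\{Z=0\}$ almost surely (since $\mixedclass$ has diameter $D$ and the policies agree whenever the shield is inactive), an application of the policy-gradient / dominated-convergence argument yields
\[
\bigl|\,\langle \grad_\neuralclass R(\pi) - \grad_\mixedclass R(\pi),\, v\rangle\,\bigr|
\;\le\; L_R \cdot \E[\mathbf{1}\{Z=0\}]
\;\le\; L_R \zeta .
\]
Taking the supremum over unit $v$ converts this into the desired norm bound, and combining with the sampling term gives the claimed $O(\beta + L_R \zeta)$.

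The main obstacle I expect is the step that converts the event-wise policy agreement $\{\pi = \pi_S \text{ when } Z=1\}$ into a clean gradient-level inequality. Naively, $\zeta$ controls the measure of disagreement but not the size of the gradient of $R$ on that disagreement set, so I would need either (i) to absorb the size of the disagreement into the constant $D$ and then into $L_R$ via Lipschitzness, or (ii) to use the log-derivative/policy-gradient identity so that each gradient expectation can be decomposed over $\{Z=0\}$ and $\{Z=1\}$, with the $\{Z=1\}$ parts canceling exactly and the $\{Z=0\}$ parts bounded by $L_R$ times the probability $\zeta$. Assumption (7) (that nonzero action probabilities are bounded below by $\delta$) is what makes the log-derivative form well-behaved on the $\{Z=0\}$ set, and I would explicitly use it to keep the importance-sampling-style factors bounded. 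Once that gradient-decomposition step is in place, the remaining algebra is routine.
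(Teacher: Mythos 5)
The paper never proves this lemma: it is imported from the REVEL paper (Appendix~B of that work) and reproduced here ``for completeness,'' so there is no in-paper proof to compare against. Judged on its own terms, your sketch follows the same route that the cited source takes: split the bias by the triangle inequality into a sampling term, bounded by $\beta$ directly from the assumptions, and a shielding term, and then control the shielding term by decomposing the policy-gradient expectation over the intervention event $\{Z=0\}$, whose probability is at most $\zeta$, with the size of the surviving contribution controlled by the Lipschitz constant $L_R$. That is the correct mechanism, and it is exactly what the two constants in the statement are pointing at.

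Two soft spots, both of which you partly flag yourself. First, the claim $\|\pi - \pi_S\| \le D\cdot\mathbf{1}\{Z=0\}$ ``almost surely'' does not typecheck: $\|\pi - \pi_S\|$ is a deterministic quantity in the policy space, not a random variable, so the diameter $D$ cannot be attached to an indicator in this way; what you actually need is the cancellation of the $\{Z=1\}$ contributions at the level of the gradient expectation. Second, $L_R$-Lipschitzness of $R$ bounds the \emph{norm} of its gradient by $L_R$, but not the \emph{difference} of two gradients, so your option (i) (absorbing the disagreement into $D$ and then ``into $L_R$ via Lipschitzness'') would not go through as stated; only option (ii), the log-derivative decomposition over $\{Z=0\}$ and $\{Z=1\}$ with exact cancellation on $\{Z=1\}$ and the bound $\|\grad R\| \le L_R$ on the $\{Z=0\}$ piece, closes the argument. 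With that route executed carefully the proof is sound and matches the cited one in substance; within the paper itself, the correct move is simply to cite REVEL.
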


\begin{lemma}
\label{lem:propel-general-bound}
(\cite{propel}, Theorem 4.1)
Let $\pi_1, \ldots, \pi_T$ be a sequence of safe policies returned by Algorithm~\ref{alg:main} (i.e., $\pi_i$ is the result of calling \Call{WpShield}{} on the trained policy) and let $\pi^*$ be the optimal safe policy. Letting $\beta$ and $\sigma^2$ be bounds on the bias and variance of the gradient estimation and let $\epsilon$ be a bound on the error incurred due to imprecision in \Call{WpShield}{}. Then letting $\eta = \sqrt{\frac{1}{\sigma^2} \left( \frac{1}{T} + \epsilon \right)}$, we have the expected regret over $T$ iterations:
\[ R(\pi^*) - \expected \left[ \frac{1}{T} \sum_{i=1}^T R(\pi_i) \right] = O\left( \sigma \sqrt{\frac{1}{T} + \epsilon} + \beta \right). \]
\end{lemma}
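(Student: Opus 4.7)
The plan is to recast Algorithm~\ref{alg:main} as functional mirror descent (FMD) in the joint neurosymbolic policy space $\mixedclass$, equipped with the Bregman divergence $D_F$ induced by the convex function $F$. Each outer iteration performs two conceptual steps: a stochastic gradient update in the ambient neural class $\neuralclass$ producing a tentative policy $\tilde\pi_i$, using an estimated gradient $\hat{g}_i$ with bias $\le \beta$ and second moment $\le \sigma^2$; and a Bregman projection of $\tilde\pi_i$ onto the safe class $\symbclass$, realized only approximately by \Call{WpShield}{} with projection error $\le \epsilon$. The output $\pi_i$ of the $i$-th iteration is this shielded policy.

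First I would establish an approximate three-point inequality that absorbs the inexact projection: for any target $\pi^* \in \symbclass$,
\[ D_F(\pi^*, \pi_{i+1}) \;\le\; D_F(\pi^*, \tilde\pi_i) - D_F(\pi_{i+1}, \tilde\pi_i) + \epsilon. \]
Combining this with the first-order optimality of the gradient step (which characterizes $\tilde\pi_i$ as the argmin of $\langle -\eta \hat{g}_i, \pi\rangle + D_F(\pi, \pi_i)$) and the Lipschitzness of $\nabla F$ yields the standard per-iteration inequality
\[ \eta \, \langle \hat{g}_i, \pi_i - \pi^* \rangle \;\le\; D_F(\pi^*, \pi_i) - D_F(\pi^*, \pi_{i+1}) + \tfrac{\eta^2}{2} \| \hat{g}_i \|_*^2 + \epsilon. \]
Taking expectation and splitting $\hat{g}_i$ into its mean and noise, the stochastic term contributes $\tfrac{\eta^2}{2}\sigma^2$ per step, and the bias term contributes an additive $O(\eta \beta)$ after multiplying by the bounded diameter of $\mixedclass$; a (quasi-)convexity argument for $R$ along the iterates then converts $\langle \nabla R(\pi_i), \pi^* - \pi_i \rangle$ into $R(\pi^*) - R(\pi_i)$. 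Summing over $i=1,\ldots,T$ and telescoping the Bregman terms gives an average-regret bound of the form
\[ R(\pi^*) - \tfrac{1}{T}\sum_{i=1}^T R(\pi_i) \;\le\; \tfrac{D_F(\pi^*,\pi_1)}{\eta T} + \tfrac{\eta}{2}\sigma^2 + \tfrac{\epsilon}{\eta} + O(\beta). \]
Plugging in the stated $\eta = \sqrt{(1/\sigma^2)(1/T + \epsilon)}$ balances the $1/(\eta T)$, $\eta \sigma^2$, and $\epsilon/\eta$ contributions, yielding the claimed $O\!\left(\sigma\sqrt{1/T+\epsilon} + \beta\right)$ rate.

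The main obstacle is the inexact projection. Ordinary mirror descent analysis requires an \emph{exact} Bregman projection onto the constraint set, whereas \Call{WpShield}{} merely returns an action-wise nearest safe perturbation, which is only an $\epsilon$-approximation of the true Bregman projection in policy space. The heart of the proof is showing that this slack inflates the three-point inequality by at most $\epsilon$ per step and does not compound across iterations---this is precisely why $\epsilon$ enters the final bound under the square root rather than being multiplied by $T$. A secondary subtlety is passing from an inner-product regret to a value regret when $R$ is generally non-convex in $\mixedclass$; this step must be justified either by a local convexity/PL condition near $\pi^*$ or by adopting the weaker notion of ``expected regret'' used in \citet{propel}.
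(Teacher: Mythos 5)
The paper does not actually prove this lemma: it is imported verbatim from \citet{propel} (their Theorem 4.1) as one of three results ``reproduced for completeness,'' so there is no in-paper proof to compare your attempt against. Your sketch is essentially the standard inexact functional mirror-descent argument underlying that cited result --- an approximate Bregman projection absorbed into a three-point inequality, a bias/variance split of the stochastic gradient, telescoping of the Bregman terms, and balancing the $1/(\eta T)$, $\eta\sigma^2$, and $\epsilon/\eta$ contributions via the stated choice of $\eta$ --- and it is consistent with the mirror-descent framing the paper sets up immediately before the lemma. The one caveat you flag yourself (converting linearized regret $\langle \nabla R(\pi_i), \pi^* - \pi_i\rangle$ into value regret $R(\pi^*) - R(\pi_i)$ requires a concavity- or PL-type condition on $R$ that the paper's assumption list does not state) is a real gap, but it is inherited from the black-box citation rather than introduced by your argument.
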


Now using Lemma~\ref{lem:mbpo-return-bound}, we will bound the gradient bias incurred by using model rollouts rather than true-environment rollouts.

\begin{lemma}
\label{lem:mbpo-bias-bound}
For a given policy $\pi$, the bias in the gradient estimate incurred by using the environment model rather than the true environment is bounded by
\[\left| \hat{\grad}_\neuralclass R(\pi) - \grad_\mixedclass R(\pi) \right| = O(\epsilon_m + \epsilon_\pi).\]
\end{lemma}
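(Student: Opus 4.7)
The plan is to reduce the bias-in-gradient bound to a bias-in-return bound and then invoke Lemma~\ref{lem:mbpo-return-bound}. Let $\hat R(\pi)$ denote the long-term return of $\pi$ under the learned model dynamics $M$ and let $R(\pi)$ denote the return under the true dynamics $P$. The gradient estimate $\hat{\grad}_\neuralclass R(\pi)$ is, by construction, the policy gradient of $\hat R(\pi)$; the target $\grad_\mixedclass R(\pi)$ is the policy gradient of $R(\pi)$.

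First, I would write both gradients in score-function form as expectations over trajectories $\tau = (\vx_0,\vu_0,\ldots)$:
\[
\grad R(\pi) = \E_{\tau \sim P,\pi}\!\left[\sum_{t} \gamma^t \, \grad \log \pi(\vu_t\mid \vx_t)\, Q^\pi_P(\vx_t,\vu_t)\right],
\]
and analogously for $\hat R(\pi)$ with $P$ replaced by $M$ and $Q^\pi_P$ by $Q^\pi_M$. The difference then splits into two pieces: (i) the difference in the trajectory distributions under the same integrand, and (ii) the difference between $Q^\pi_P$ and $Q^\pi_M$ at corresponding state-action pairs.

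For piece (i), I would use the simulation-lemma argument already underlying Lemma~\ref{lem:mbpo-return-bound}: the total-variation distance between the joint state-action distributions at time $t$ under $(P,\pi)$ and $(M,\pi)$ is $O(t(\epsilon_m+\epsilon_\pi))$, and the integrand (score times $Q$) is bounded since assumption~7 gives $\|\grad \log \pi\|\le 1/\delta$ and $Q$ is bounded by $R/(1-\gamma)$. Summing the per-step contributions with the geometric discount gives a contribution of order $\epsilon_m+\epsilon_\pi$. For piece (ii), I would apply Lemma~\ref{lem:mbpo-return-bound} itself to the $Q$-functions (which are tail returns starting from $(\vx_t,\vu_t)$) to get $|Q^\pi_P-Q^\pi_M| = O(\epsilon_m+\epsilon_\pi)$ uniformly, and then multiply by the bounded score and sum over $t$ with discounting.

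The main obstacle I anticipate is the unbounded score function $\grad \log \pi$, which is exactly the reason assumption~7 is made; without it no uniform bound on the integrand is available. A secondary subtlety is that Lemma~\ref{lem:mbpo-return-bound} is stated for returns rather than $Q$-functions, so I would need to recast the tail-return difference in a form to which that lemma applies (for example by conditioning on an initial state distribution concentrated at $(\vx_t,\vu_t)$). Once both pieces are combined, the triangle inequality yields $|\hat{\grad}_\neuralclass R(\pi) - \grad_\mixedclass R(\pi)| = O(\epsilon_m+\epsilon_\pi)$, as required.
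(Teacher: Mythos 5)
Your proposal follows essentially the same route as the paper's proof: both invoke the policy gradient theorem to write the gradient in score-function form, apply Lemma~\ref{lem:mbpo-return-bound} to bound $|Q^\pi - \hat{Q}^\pi|$ by $O(\epsilon_m + \epsilon_\pi)$, and use assumption~7 (the $\pi(\vu\mid\vx) > \delta$ lower bound) to bound the score term. The one substantive difference is your piece (i): you explicitly account for the fact that the two gradients are expectations over \emph{different} state-action distributions (the model-induced occupancy versus the true one) and bound that mismatch via a simulation-lemma argument. The paper's proof silently merges the two expectations into one, i.e., it treats $\expected[\hat\grad\log\pi\cdot\hat{Q}] - \expected[\grad\log\pi\cdot Q]$ as $\expected[\grad\log\pi\cdot(\hat{Q}-Q)]$ under a single distribution, which elides exactly the term you isolate. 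Since that term is also $O(\epsilon_m + \epsilon_\pi)$, the conclusion is unaffected, but your decomposition is the more careful one. Your noted subtlety about Lemma~\ref{lem:mbpo-return-bound} being stated for returns rather than $Q$-functions applies equally to the paper's usage and is handled the way you suggest, by viewing $Q^\pi(\vx,\vu)$ as a return from a point initial distribution.
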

\begin{proof}
Recall from the policy gradient theorem \citep{pg-thm} that
\[ \grad_\neuralclass R(\pi) = \expected_{\vx \sim \rho_\pi, \vu \sim \pi} \left[ \grad_\neuralclass \log \pi (\vu \mid \vx) Q^\pi(\vx, \vu) \right]\]
where $\rho_\pi$ is the state distribution induced by $\pi$ and $Q^\pi$ is the long-term expected reward starting from state $\vx$ under action $\vu$. By Lemma~\ref{lem:mbpo-return-bound}, we have $| Q^\pi(\vx, \vu) - \hat{Q}^\pi(\vx, \vu) | \le O(\epsilon_m + \epsilon_\pi)$ where $\hat{Q}^\pi$ is the expected return under the learned environment model. Then because $\log \pi(\vu \mid \vx)$ is the same regardless of whether we use the environment model or the true environment, we have $\grad_\neuralclass \log \pi(\vu \mid \vx) = \hat{\grad}_\neuralclass \log \pi(\vu \mid \vx)$ and
\begin{align*}
  \left| \hat{\grad}_\neuralclass R(\pi) - \grad_\neuralclass R(\pi) \right| &= \left| \expected \left[ \hat{\grad}_\neuralclass \log \pi(\vu \mid \vx) \hat{Q}^\pi(\vx, \vu) \right] - \expected \left[ \grad_\neuralclass \log \pi(\vu \mid \vx) Q^\pi(\vx, \vu) \right] \right| \\
  &= \left| \expected \left[ \grad_\neuralclass \log \pi(\vu \mid \vx) \hat{Q}^\pi(\vx, \vu) \right] - \expected \left[ \grad_\neuralclass \log \pi(\vu \mid \vx) Q^\pi(\vx, \vu) \right] \right| \\
  &= \left| \expected \left[ \grad_\neuralclass \log \pi(\vu \mid \vx) \hat{Q}^\pi(\vx, \vu) - \grad_\neuralclass \log \pi(\vu \mid \vx) Q^\pi(\vx, \vu) \right] \right| \\
  &= \left| \expected \left[ \grad_\neuralclass \log \pi(\vu \mid \vx) \left( \hat{Q}^\pi(\vx, \vu) - Q^\pi(\vx, \vu) \right) \right] \right|
\end{align*}
Now because we assume $\pi(\vu \mid \vx) > \delta$ whenever $\pi(\vu \mid \vx) > 0$, the gradient of the log is bounded above by a constant. Therefore,
\[\left| \hat{\grad}_\neuralclass R(\pi) - \grad_\mixedclass R(\pi) \right| = O(\epsilon_m + \epsilon_\pi).\]
\end{proof}

\begin{reptheorem}{thm:optimality}
(\approach{} converges to an optimal safe policy.) Let $\pi_S^{(i)}$ for $1 \le i \le T$ be a sequence of safe policies learned by \approach{} (i.e., $\pi_S^{(i)} = \lambda \vx. \Call{WpShield}{M, \vx, \pi(\vx)}$) and let $\pi_S^*$ be the optimal safe policy. Let $\beta$ and $\sigma^2$ be the bias and variance in the gradient estimate which is incurred due to sampling. Then setting the learning rate $\eta = \sqrt{\frac{1}{\sigma^2} \left( \frac{1}{T} + \epsilon \right)}$, we have the expected regret bound:
\[ R \left( \pi_S^* \right) - \E\left[ \frac{1}{T} \sum_{i=1}^T R \left( \pi_S^{(i)} \right) \right] = O \left( \sigma \sqrt{\frac{1}{T} + \epsilon} + \beta + L_R \zeta + \epsilon_m + \epsilon_\pi \right) \]
\end{reptheorem}
\begin{proof}
The total bias in gradient estimates is bounded by the sum of (i) the bias incurred by sampling, (ii) the bias incurred by shield interference, and (iii) the bias incurred by using an environment model rather than the true environment. Part (i) is bounded by assumption, part (ii) is bounded by Lemma~\ref{lem:shield-bias-bound}, and part (iii) is bounded by Lemma~\ref{lem:mbpo-bias-bound}. Combining these results, we find that the total bias in the gradient estimate is $O(\beta + L_R \zeta + \epsilon_m + \epsilon_\pi)$. Plugging this bound into Lemma~\ref{lem:propel-general-bound}, we reach the desired result.
\end{proof}
\section{Overlapping Polyhedra}\label{app:overlap}

In Section~\ref{sec:inst-union}, we claimed that in many environments, the safe region can be represented by \emph{overlapping} polyhedra. In this section, we formalize the notion of ``overlapping'' in our context and explain why many practical environments satisfy this property.

We say two polyhedra ``overlap'' if their intersection has positive volume. That is, polyhedra $p_1$ and $p_2$ overlap if $\mu(p_1 \cap p_2) > 0$ where $\mu$ is the Lebesgue measure.

Often in practical continuous control environments, either this property is satisfied, or it is impossible to verify any safe trajectories at all. This because in continuous control, the system trajectory is a path in the state space, and this path has to move between the different polyhedra defining the safe space. To see how this necessitates our overlapping property, let's take a look at a few possibilities for how the path can pass from one polyhedron $p_1$ to a second polyhedron $p_2$. For simplicity, we'll assume the polyhedra are closed, but this argument can be extended straightforwardly to open or partially open polyhedra.
\begin{itemize}
    \item If the two polyhedra are disconnected, then the system is unable to transition between them because the system trajectory must define a path in the safe region of the state space. Since the two sets are disconnected, the path must pass through the unsafe states, and therefore cannot be safe.
    \item Suppose the dimension of the state space is $n$ and the intersection of the two polyhedra is an $n-1$ dimensional surface (for example, if the state space is 2D then the polyhedra intersect in a line segment). In this case, we can add a new polyhedron to the set of safe polyhedra in order to provide an overlap to both $p_1$ and $p_2$. Specifically, let $X$ be the set of vertices of $p_1 \cap p_2$. Choose a point $x_1$ in the interior of $p_1$ and a point $x_2$ in the interior of $p_2$. Now define $p'$ as the convex hull of $X \cup \{ x_1, x_2 \}$. Note that $p' \subseteq p_1 \cup p_2$, so we can add $p'$ to the set of safe polyhedra without changing the safe state space as a whole. However, $p'$ overlaps with both $p_1$ and $p_2$, and therefore the modified environment has the overlapping property.
    \item Otherwise, $p_1 \cap p_2$ is a lower-dimensional surface. Then for every point $x \in p_1 \cap p_2$ and for every $\epsilon > 0$ there exists an unsafe point $x'$ such that $\|x - x'\| < \epsilon$. In order for the system to transition from $p_1$ to $p_2$, it must pass through a point which is arbitrarily close to unsafe. As a result, the system must be arbitrarily fragile --- any perturbation can result in unsafe behavior. Because real-world systems are subject to noise and/or modeling error, it would be impossible to be sure the system would be safe in this case.
\end{itemize}
\section{Further Experimental Data}
\label{app:results}

In this section we provide further details about our experimental setup and results.

Our experiments are taken from~\citet{revel}, and consist of 10 environments with continuous state and action spaces. The mountain-car and pendulum benchmarks are continuous versions of the corresponding classical control environments. The acc benchmark represents an adaptive cruise control environment. The remaining benchmarks represent various situations arising from robotics. See~\citet{revel}, Appendix C for a more complete description of each benchmark.

As mentioned in Section~\ref{sec:evaluation}, our tool is built on top of MBPO~\citep{mbpo} using SAC~\citep{sac} as the underlying learning algorithm. We gather real data for 10 episodes for each model update then collect data from 70 simulated episodes before updating the environment model again. We look five time steps into the future during safety analysis. Our SAC implementation (adapted from~\citet{sac-impl}) uses automatic entropy tuning as proposed in~\citet{sac-tune}. To compare with CPO we use the original implementation from~\citet{cpo}. Each training process is cut off after 48 hours. We train each benchmark starting from nine distinct seeds.

Because the code for \citet{conservative-critic} is not available, we use a modified version of our code for comparison, which we label CSC-MBPO. Our implementation follows Algorithm~\ref{alg:main} except that $\Call{WpShield}{}$ is replaced by an alternative shielding framework. This framework learns a neural safety signal using conservative Q-learning and then resamples actions from the policy until a safe action is chosen, as described in \citet{conservative-critic}. We chose this implementation in order to give the fairest possible comparison between \toolname{} and the conservative safety critic approach, as the only differences between the two tools in our experiments is the shielding approach. The code for our tool includes our implementation of CSC-MBPO.

\begin{figure}
    \captionsetup[subfigure]{aboveskip=-1pt, belowskip=-2pt}
    \centering
    \begin{subfigure}{\curvefigwidth\textwidth}
        \centering
        \includegraphics[width=\textwidth]{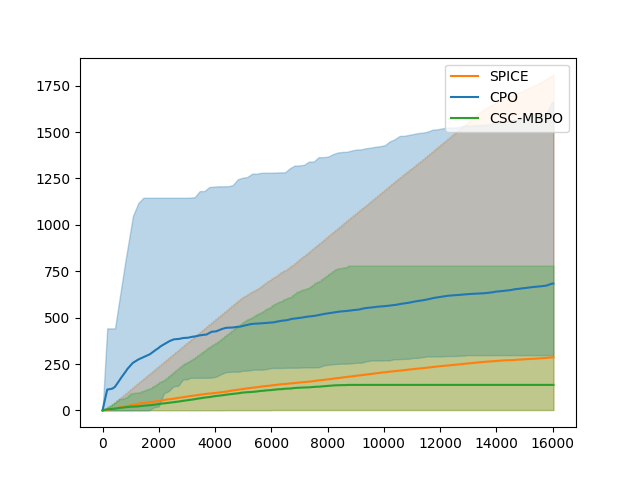}
        \caption{acc}
    \end{subfigure}
    \begin{subfigure}{\curvefigwidth\textwidth}
        \centering
        \includegraphics[width=\textwidth]{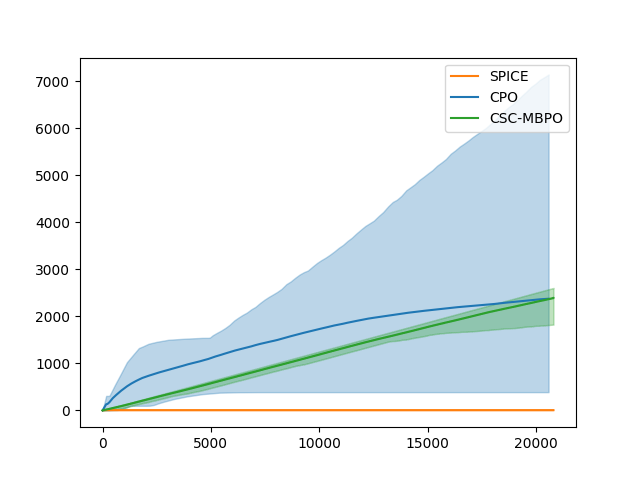}
        \caption{mountain-car}
    \end{subfigure}
    \begin{subfigure}{\curvefigwidth\textwidth}
        \centering
        \includegraphics[width=\textwidth]{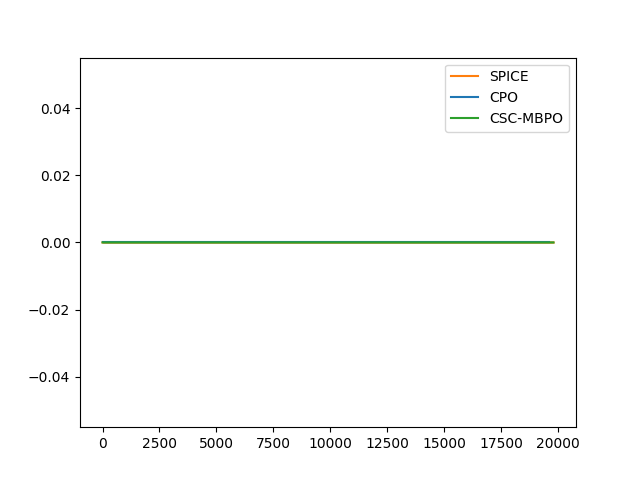}
        \caption{noisy-road}
    \end{subfigure}
    \begin{subfigure}{\curvefigwidth\textwidth}
        \centering
        \includegraphics[width=\textwidth]{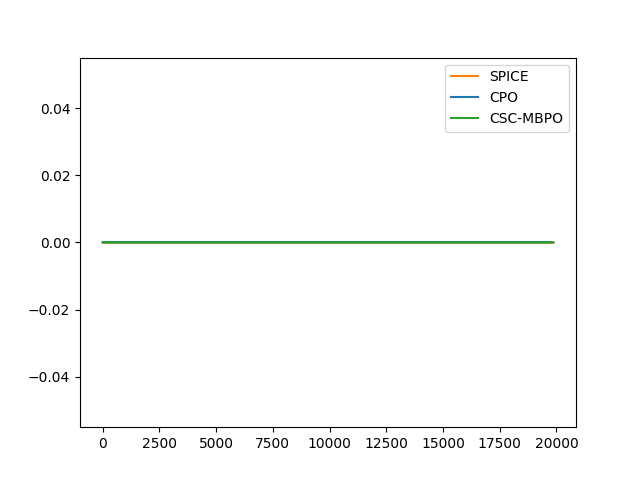}
        \caption{road}
    \end{subfigure}
    \caption{Cumulative safety violations over time.}
    \label{fig:extra-safety}
\end{figure}

The safety curves for the remaining benchmarks are presented in Figure~\ref{fig:extra-safety}.

\begin{figure}
    \vspace{-0.3cm}
    \captionsetup[subfigure]{aboveskip=-1pt, belowskip=-2pt}
    \centering
    \begin{subfigure}{\curvefigwidth\textwidth}
        \centering
        \includegraphics[width=\textwidth]{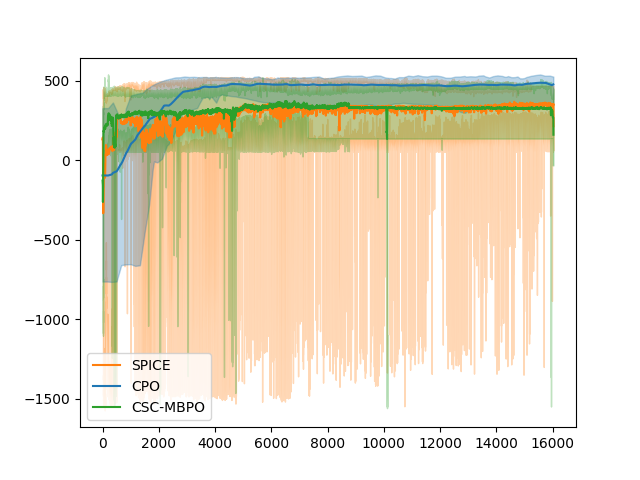}
        \caption{acc}
    \end{subfigure}
    \begin{subfigure}{\curvefigwidth\textwidth}
        \centering
        \includegraphics[width=\textwidth]{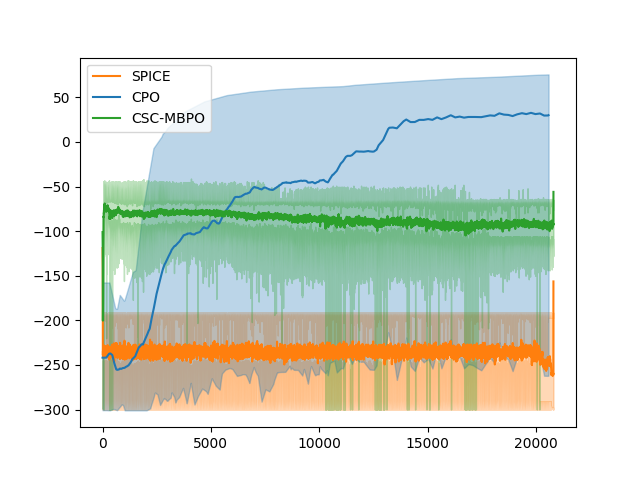}
        \caption{mountain-car}
    \end{subfigure}
    \begin{subfigure}{\curvefigwidth\textwidth}
        \centering
        \includegraphics[width=\textwidth]{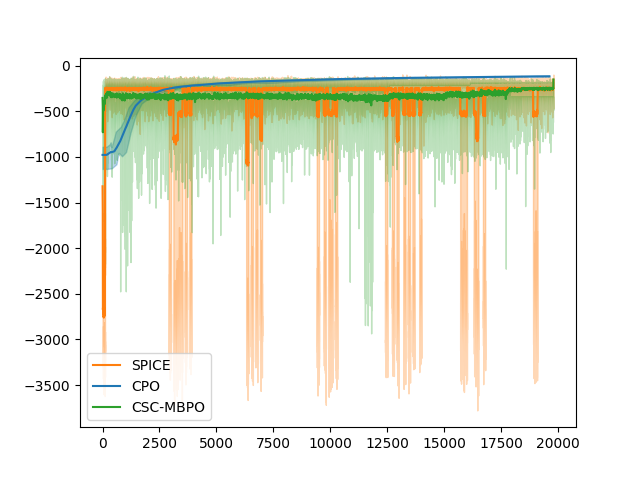}
        \caption{noisy-road}
    \end{subfigure}
    \begin{subfigure}{\curvefigwidth\textwidth}
        \centering
        \includegraphics[width=\textwidth]{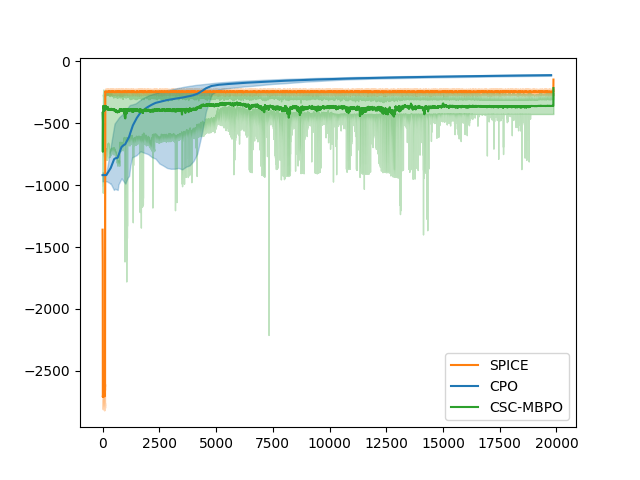}
        \caption{road}
    \end{subfigure}
    \caption{Training curves for \toolname{} and CPO.}
    \label{fig:extra_training_curves}
\end{figure}

Training curves for the remaining benchmarks are presented in Figure~\ref{fig:extra_training_curves}.

\subsection{Exploring the Safety Horizon}

\begin{figure}
    \vspace{-0.3cm}
    \captionsetup[subfigure]{aboveskip=-1pt, belowskip=-2pt}
    \centering
    \begin{subfigure}{\curvefigwidth\textwidth}
        \centering
        \includegraphics[width=\textwidth]{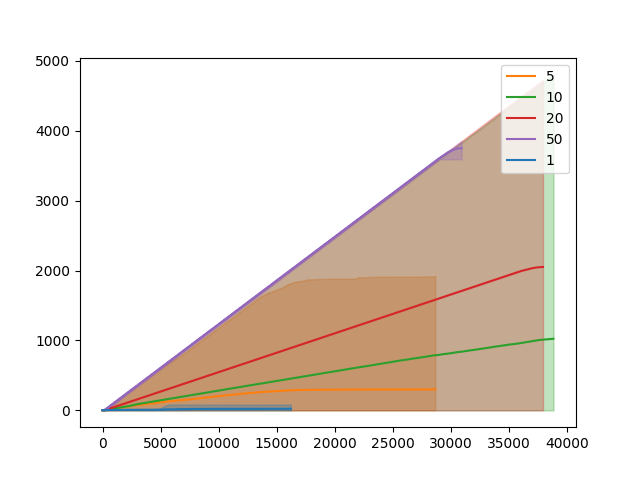}
        \caption{acc}
    \end{subfigure}
    \begin{subfigure}{\curvefigwidth\textwidth}
        \centering
        \includegraphics[width=\textwidth]{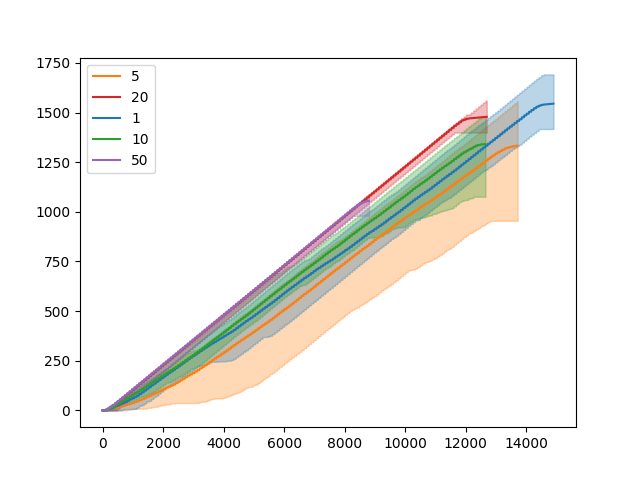}
        \caption{car-racing}
    \end{subfigure}
    \begin{subfigure}{\curvefigwidth\textwidth}
        \centering
        \includegraphics[width=\textwidth]{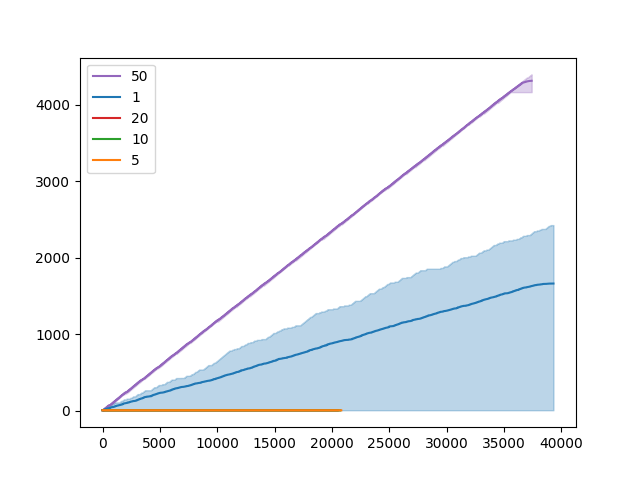}
        \caption{mountain-car}
    \end{subfigure}
    \begin{subfigure}{\curvefigwidth\textwidth}
        \centering
        \includegraphics[width=\textwidth]{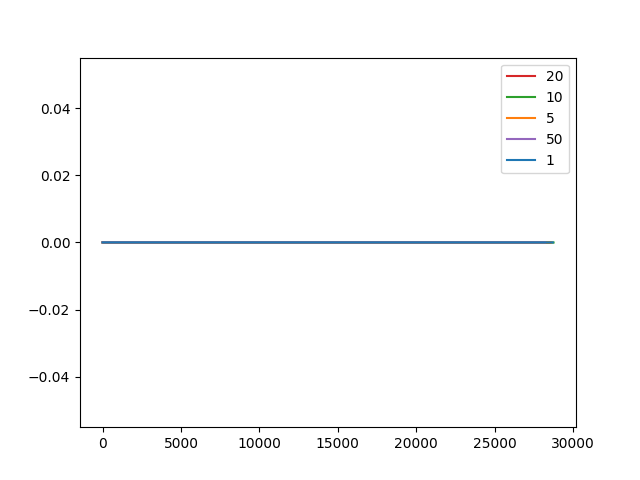}
        \caption{noisy-road}
    \end{subfigure}
    \begin{subfigure}{\curvefigwidth\textwidth}
        \centering
        \includegraphics[width=\textwidth]{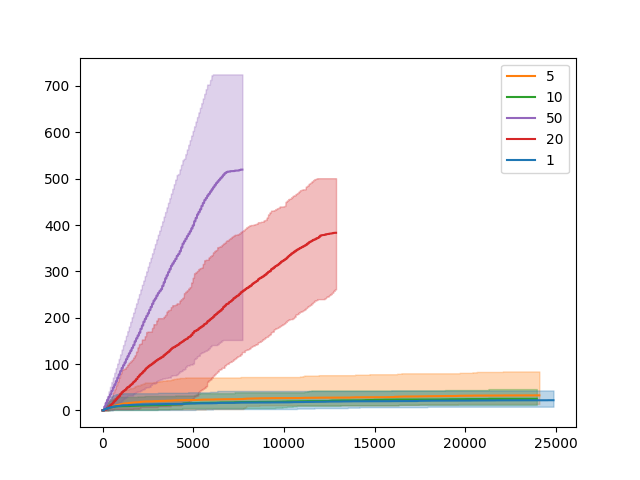}
        \caption{noisy-road-2d}
    \end{subfigure}
    \begin{subfigure}{\curvefigwidth\textwidth}
        \centering
        \includegraphics[width=\textwidth]{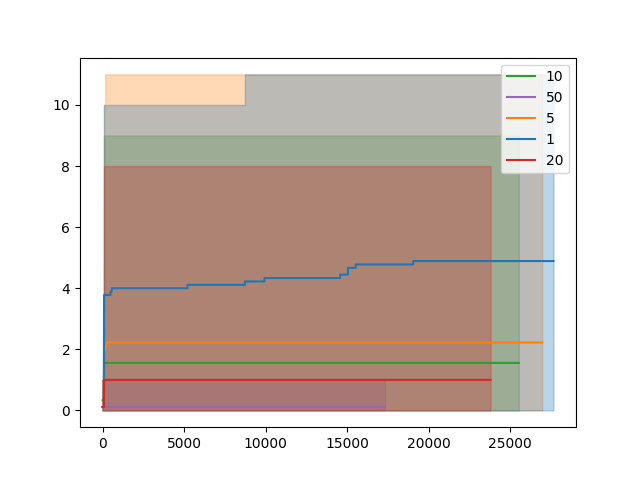}
        \caption{obstacle}
    \end{subfigure}
    \begin{subfigure}{\curvefigwidth\textwidth}
        \centering
        \includegraphics[width=\textwidth]{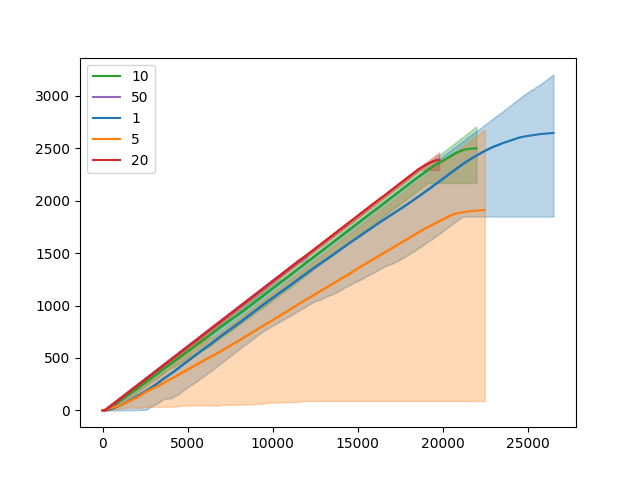}
        \caption{obstacle2}
    \end{subfigure}
    \begin{subfigure}{\curvefigwidth\textwidth}
        \centering
        \includegraphics[width=\textwidth]{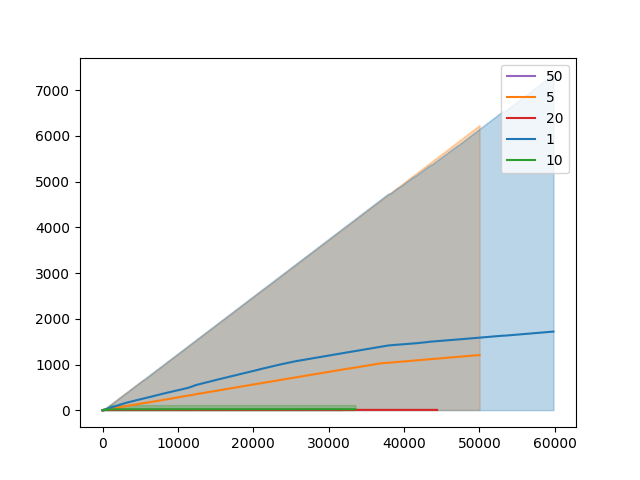}
        \caption{pendulum}
    \end{subfigure}
    \begin{subfigure}{\curvefigwidth\textwidth}
        \centering
        \includegraphics[width=\textwidth]{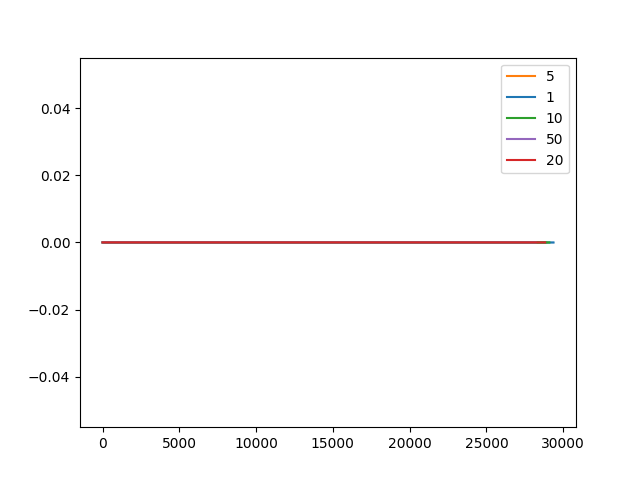}
        \caption{road}
    \end{subfigure}
    \begin{subfigure}{\curvefigwidth\textwidth}
        \centering
        \includegraphics[width=\textwidth]{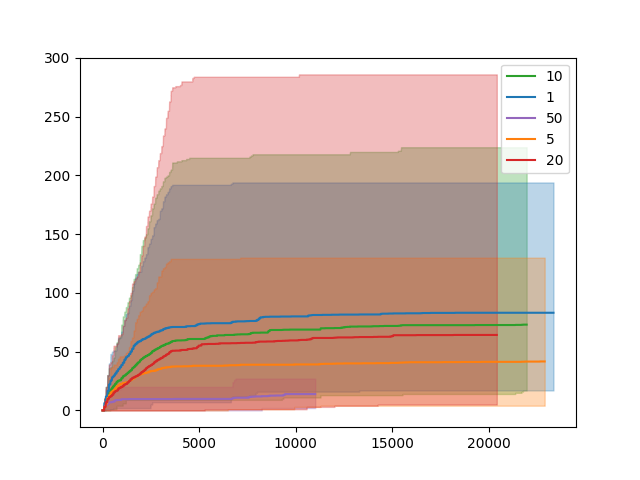}
        \caption{road-2d}
    \end{subfigure}
    \caption{Safety curves for \toolname{} using different safety horizons}
    \label{fig:safety_tradeoff}
\end{figure}

As mentioned in Section~\ref{sec:evaluation}, \approach{} relies on choosing a good horizon over which to compute the weakest precondition. We will now explore this tradeoff in more detail. Safety curves for each benchmark under several different choices of horizon are presented in Figure~\ref{fig:safety_tradeoff}. The performance curves for each benchmark are shown in Figure~\ref{fig:performance_tradeoff}.

There are a few notable phenomena shown in these curves. As expected, in most cases using a safety horizon of one does not give particularly good safety. This is expected because as the safety horizon becomes very small, it is easy for the system to end up in a state where there are no safe actions. The obstacle benchmark shows this trend very clearly: as the safety horizon increases, the number of safety violations decreases.

On the other hand, several benchmarks (e.g., acc, mountain-car, and noisy-road-2d) show a more interesting dynamic: very large safety horizons also lead to an increase in safety violations. This is a little less intuitive because as we look farther into the future, we should be able to avoid more unsafe behaviors. However, in reality there is an explanation for this phenomenon. The imprecision in the environment model (both due to model learning and due to the call to \Call{Approximate}{}) accumulates for each time step we need to look ahead. As a result, large safety horizons lead to a fairly imprecise analysis. Not only does this interfere with exploration, but it can also lead to an infeasible constraint set in the shield. (That is, $\phi$ in Algorithm~\ref{alg:shield} becomes unsatisfiable.) In this case, the projection in Algorithm~\ref{alg:shield} is ill-defined, so \approach{} relies on a simplistic backup controller. This controller is not always able to guarantee safety, leading to an increase in safety violations as the horizon increases.

In practice, we find that a safety horizon of five provides a good amount of safety in most benchmarks without interfering with training. Smaller or larger values can lead to more safety violations while also reducing performance a little in some benchmarks. In general, tuning the safety horizon for each benchmark can yield better results, but for the purposes of this evaluation we have chosen to use the same horizon throughout.

\begin{figure}
    \vspace{-0.3cm}
    \captionsetup[subfigure]{aboveskip=-1pt, belowskip=-2pt}
    \centering
    \begin{subfigure}{\curvefigwidth\textwidth}
        \centering
        \includegraphics[width=\textwidth]{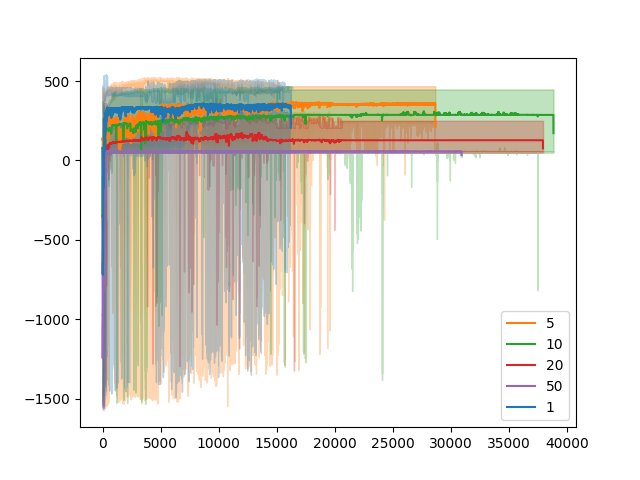}
        \caption{acc}
    \end{subfigure}
    \begin{subfigure}{\curvefigwidth\textwidth}
        \centering
        \includegraphics[width=\textwidth]{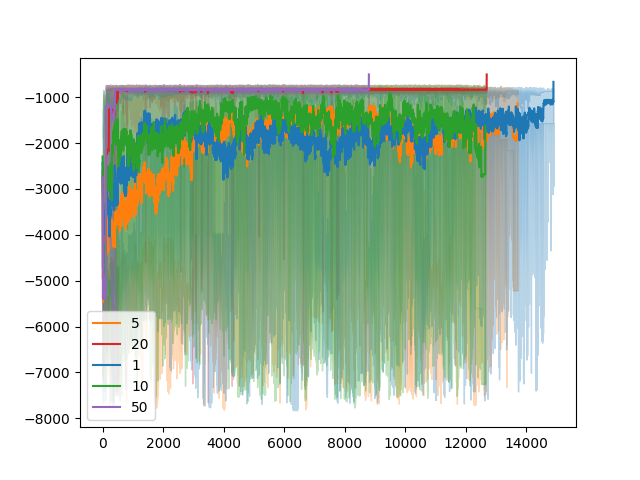}
        \caption{car-racing}
    \end{subfigure}
    \begin{subfigure}{\curvefigwidth\textwidth}
        \centering
        \includegraphics[width=\textwidth]{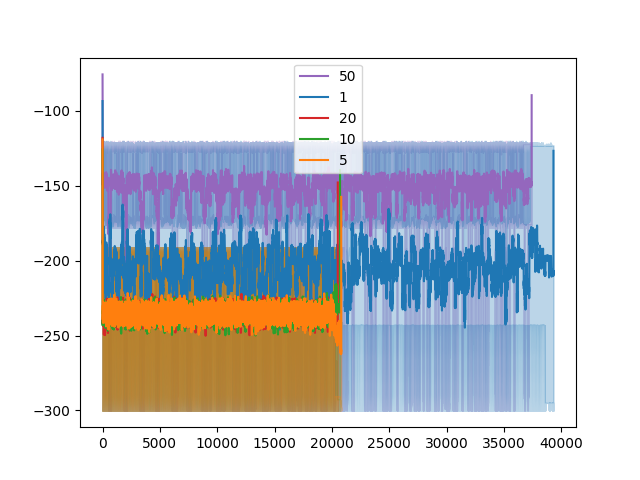}
        \caption{mountain-car}
    \end{subfigure}
    \begin{subfigure}{\curvefigwidth\textwidth}
        \centering
        \includegraphics[width=\textwidth]{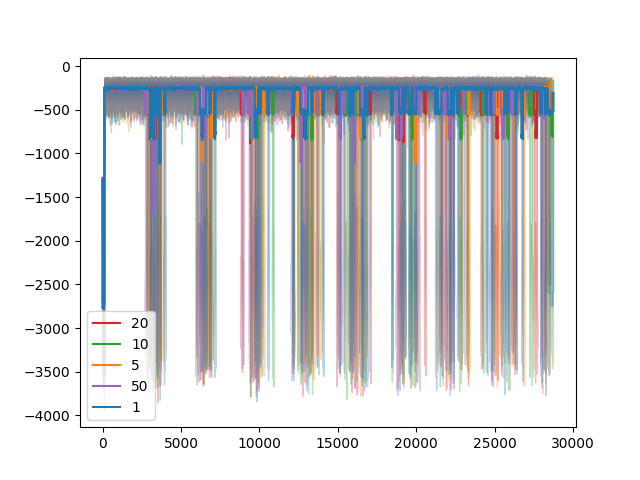}
        \caption{noisy-road}
    \end{subfigure}
    \begin{subfigure}{\curvefigwidth\textwidth}
        \centering
        \includegraphics[width=\textwidth]{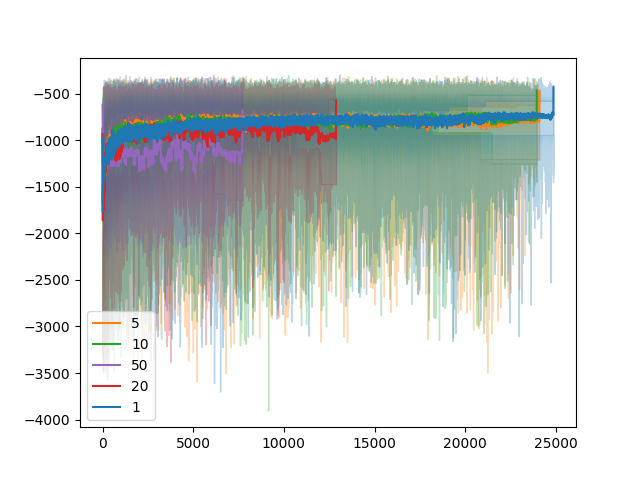}
        \caption{noisy-road-2d}
    \end{subfigure}
    \begin{subfigure}{\curvefigwidth\textwidth}
        \centering
        \includegraphics[width=\textwidth]{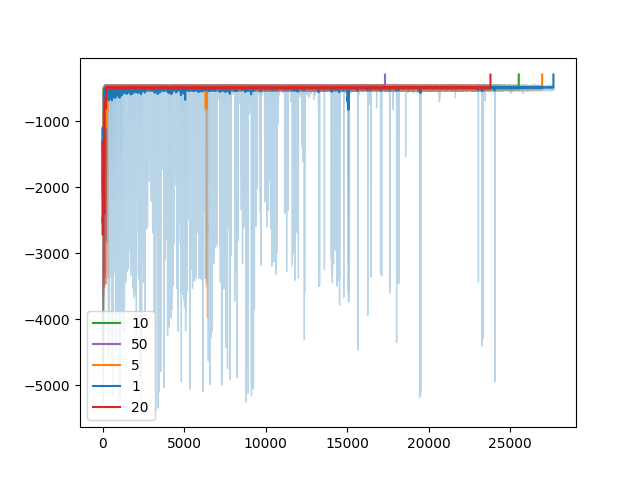}
        \caption{obstacle}
    \end{subfigure}
    \begin{subfigure}{\curvefigwidth\textwidth}
        \centering
        \includegraphics[width=\textwidth]{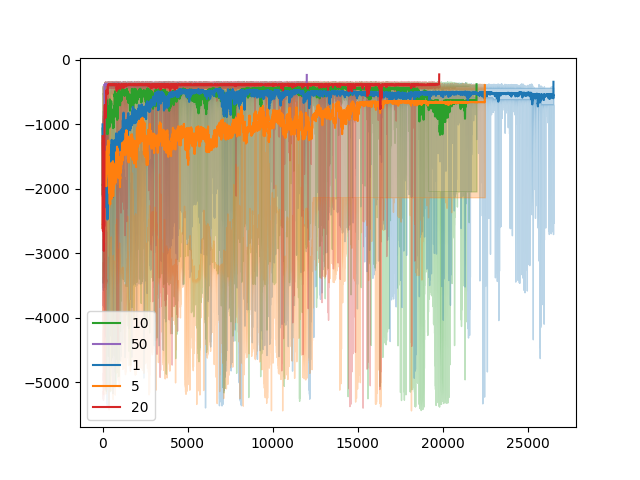}
        \caption{obstacle2}
    \end{subfigure}
    \begin{subfigure}{\curvefigwidth\textwidth}
        \centering
        \includegraphics[width=\textwidth]{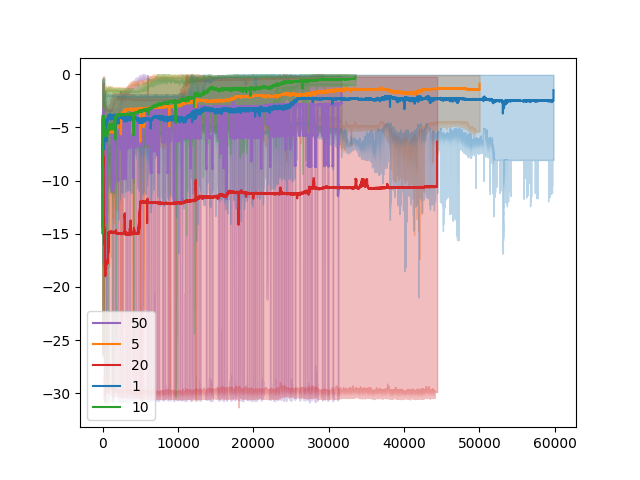}
        \caption{pendulum}
    \end{subfigure}
    \begin{subfigure}{\curvefigwidth\textwidth}
        \centering
        \includegraphics[width=\textwidth]{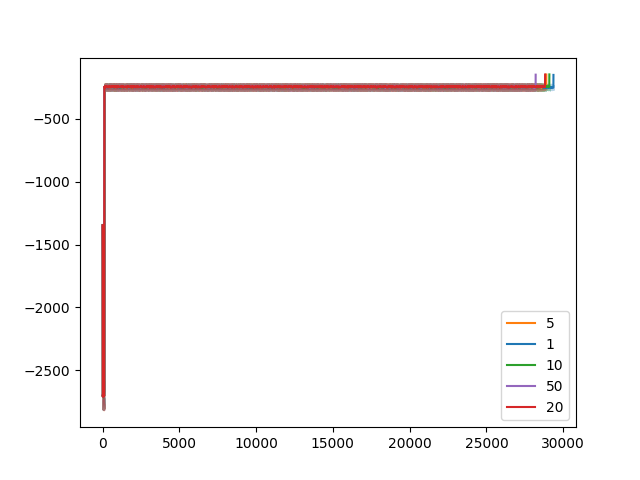}
        \caption{road}
    \end{subfigure}
    \begin{subfigure}{\curvefigwidth\textwidth}
        \centering
        \includegraphics[width=\textwidth]{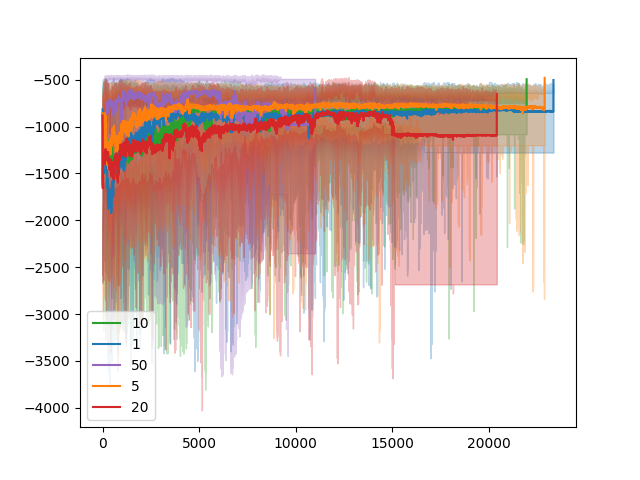}
        \caption{road-2d}
    \end{subfigure}
    \caption{Training curves for \toolname{} using different safety horizons}
    \label{fig:performance_tradeoff}
\end{figure}

\subsection{Qualitative Evaluation}

\begin{figure}
    \centering
    \includegraphics[width=\textwidth]{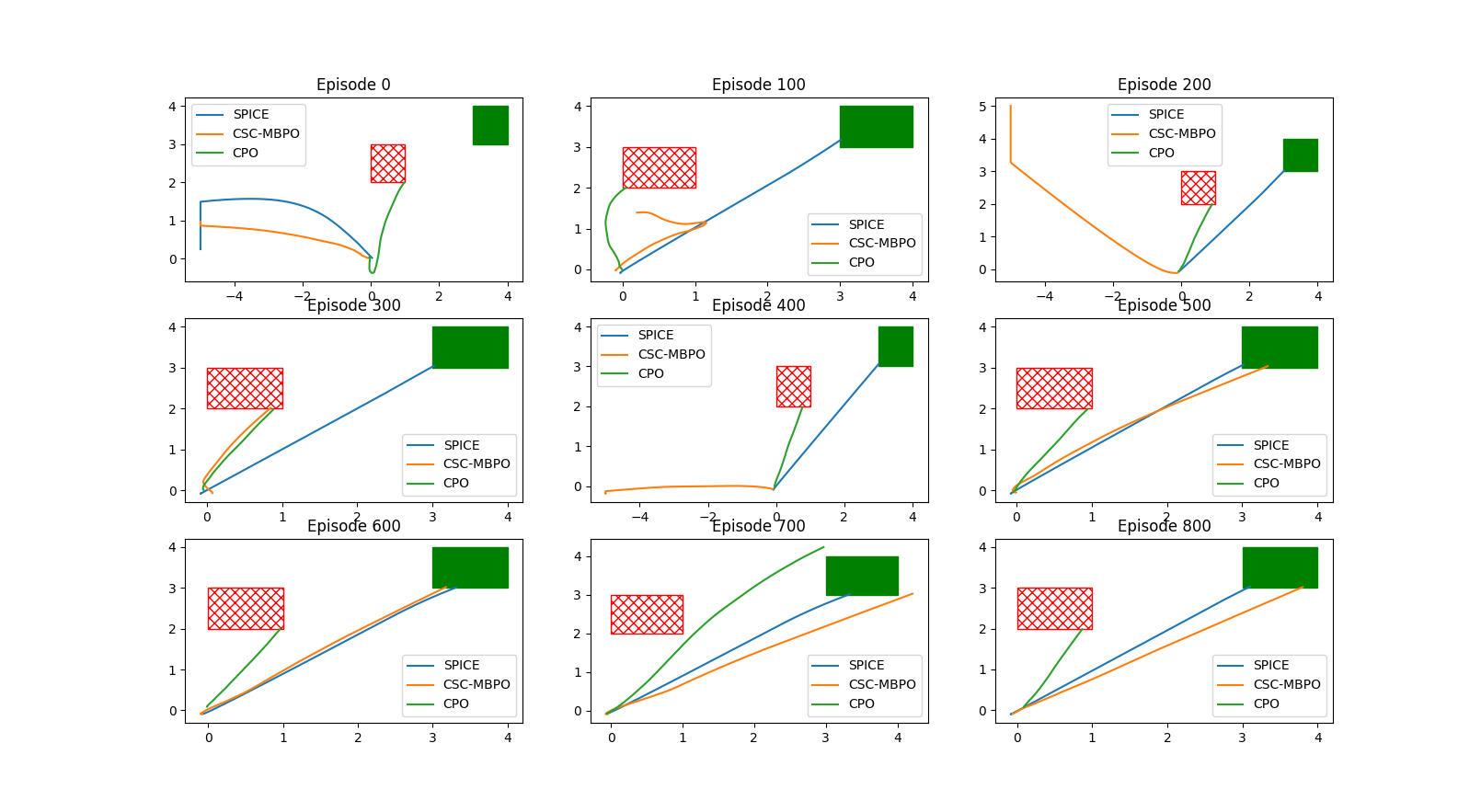}
    \caption{Trajectories at various stages of training.}
    \label{fig:full-visualization}
\end{figure}

Figure~\ref{fig:full-visualization} shows trajectories sampled from each tool at various stages of training. Specifically, each 100 episodes during training, 100 trajectories were sampled. The plotted trajectories represent the worst samples from this set of 100. The environment represents a robot moving in a 2D plane which must reach the green shaded region while avoiding the red crosshatched region. (Notice that while the two regions appear to move in the figure, the are actually static. The axes in each part of the figure change in order to represent the entirety of the trajectories.) From this visualization, we can see that \approach{} is able to quickly find a policy which safely reaches the goal every time. By contrast, CSC-MBPO requires much more training data to find a good policy and encounters more safety violations along the way. CPO is also slower to converge and more unsafe than \approach{}.

\end{document}